\documentclass{article}

\usepackage[preprint]{neurips_2026}

\usepackage[utf8]{inputenc} 
\usepackage[T1]{fontenc}    
\usepackage{hyperref}       
\usepackage{url}            
\usepackage{booktabs}       
\usepackage{amsfonts}       
\usepackage{nicefrac}       
\usepackage{microtype}      
\usepackage{xcolor}         

\usepackage{natbib}
\usepackage{caption}
\usepackage{graphicx}
\usepackage{amsmath}
\usepackage{amsthm}
\usepackage{subcaption}
\usepackage{amssymb}
\usepackage{mathtools}
\usepackage{bm}
\usepackage{xspace}
\usepackage[inline]{enumitem}
\usepackage{threeparttable}
\usepackage{tikz}
\usepackage{multirow}
\usepackage{multicol}
\usepackage{comment}
\usepackage{tabularx}
\newcolumntype{Y}{>{\raggedright\arraybackslash}X}

\title{Scalable Derivative Gaussian Processes via Exact Gradient Reduction}

\author{%
  Hyunseok Seung \\
  Department of Statistics \\
  University of Wisconsin--Madison \\
  Madison, WI, 53706 \\
  \texttt{hseung2@wisc.edu} \\
  \And
  Matthias Katzfuss \\
  Department of Statistics \\
  University of Wisconsin--Madison \\
  Madison, WI, 53706 \\
  \texttt{katzfuss@gmail.com}
}

\usepackage{graphicx,bm,colonequals,amsmath,amssymb,url,xcolor,bbm}
\usepackage{array,tabularx,multirow}
\usepackage{soul} 
\usepackage{placeins} 
\usepackage[normalem]{ulem}

\graphicspath{{plots/}}

\newcommand{\todo}[1]{\textcolor{red}{[#1]}}

\usepackage{mathtools}

\usepackage{amsthm}

\newtheorem{proposition}{Proposition}

\newtheorem{lemma}{Lemma}

\newtheorem{remark}{Remark}

\usepackage[ruled,vlined]{algorithm2e}
\SetKwInput{KwInput}{Input}
\SetKwInput{KwOutput}{Output}
\usepackage{algorithmic}
\usepackage{framed}
\usepackage{float}

\newcommand{\bh}{\mathbf{h}}
\newcommand{\bu}{\mathbf{u}}

\newcommand{\ba}{\mathbf{a}}
\newcommand{\bv}{\mathbf{v}}

\newcommand{\bd}{\mathbf{d}}

\newcommand{\bx}{\mathbf{x}}
\newcommand{\by}{\mathbf{y}}
\newcommand{\bq}{\mathbf{q}}
\newcommand{\bg}{\mathbf{g}}
\newcommand{\bk}{\mathbf{k}}
\newcommand{\br}{\mathbf{r}}

\newcommand{\be}{\mathbf{e}}

\newcommand{\bF}{\mathbf{F}}

\newcommand{\bI}{\mathbf{I}}
\newcommand{\bD}{\mathbf{D}}
\newcommand{\bH}{\mathbf{H}}

\newcommand{\bK}{\mathbf{K}}
\newcommand{\bX}{\mathbf{X}}

\newcommand{\bR}{\mathbf{R}}

\newcommand{\bfzero}{\mathbf{0}}

\newcommand{\bfmu}{\bm{\mu}}

\newcommand{\bftheta}{\bm{\theta}}

\newcommand{\bfepsilon}{\bm{\epsilon}}

\newcommand{\bfLambda}{\bm{\Lambda}}
\newcommand{\bfDelta}{\bm{\Delta}}

\newcommand{\cov}{\textbf{Cov}}

\newcommand{\GP}{\mathcal{GP}}

\newcommand{\order}{\mathcal{O}}
\newcommand{\normal}{\mathcal{N}}

\DeclareMathOperator*{\KL}{KL}

\DeclareMathOperator*{\bbE}{\mathbb{E}}
\newcommand{\bbR}{\mathbb{R}}

\newcommand{\domain}{\mathcal{D}}

\newcommand{\tera}[0]{\textsc{Tera}\xspace}

\begin{document}

\maketitle

\begin{abstract}
Gradient observations can substantially improve Gaussian process (GP) surrogates, particularly in high-dimensional settings where function evaluations are expensive. However, exact inference with $n$ function values and $n$ full gradients in $d$ dimensions scales cubically in the joint state size, imposing an intractable $\mathcal{O}(n^3 d^3)$ computational bottleneck. We introduce \tera, a highly scalable derivative GP method based on target-specific exact gradient reduction. We prove that for stationary kernels, the gradient components orthogonal to the directions connecting the target and conditioning points are conditionally independent of the target function value; consequently, the exact conditional density is fully characterized by at most $m^2$ directional derivatives once a conditioning set of size $m$ is specified. By using these reduced, dimension-free conditionals as local factors in a Vecchia approximation, \tera effectively decouples $n$ and $d$ from the dense matrix inversion. This reduces the per-target evaluation cost to $\mathcal{O}(dm^2 + m^6)$ time and $\mathcal{O}(dm^2 + m^4)$ memory, leaving the underlying derivative GP model mathematically unchanged. Empirical evaluations demonstrate that \tera achieves state-of-the-art predictive accuracy while operating orders of magnitude faster than standard derivative GPs. Crucially, both computation time and peak GPU memory remain essentially flat with respect to $d$, enabling highly scalable inference in high-dimensional spaces. Code is available at \href{https://github.com/hseung88/tera}{https://github.com/hseung88/tera}.

\end{abstract}

\section{Introduction}
\label{sec:introduction}
Gradient observations are increasingly common in surrogate modeling, especially when data are generated by computational models equipped with automatic differentiation~\cite{Baydin2015AutomaticDI,Ament2022ScalableFB} or adjoint derivative solvers~\cite{Jameson1988AerodynamicDV}. The downstream target, however, is often not the gradient field itself. Gaussian-process (GP) regression requires calibrated prediction of a function response~\cite{Rasmussen2005GaussianPF}, and Bayesian optimization (BO) requires a surrogate that can locate extrema of a scalar objective~\cite{Wu2017BayesianOW}. Each query may therefore provide
a full $d$-dimensional gradient, while the surrogate is ultimately used through function-value predictions and decisions.

Derivative GP provide a principled model for such data by coupling function values and input derivatives through kernel derivatives~\cite{Solak2002DerivativeOI,Rasmussen2005GaussianPF}. Exact inference with $n$ function values and $n$ full gradients in $d$ dimensions involves a Gaussian state of size $n(d+1)$, which makes dense linear algebra and memory costs prohibitive when either $n$ or $d$ is moderate or large. Existing scalable derivative GP methods reduce this cost through kernel interpolation~\cite{Eriksson2018ScalingGP,huang2026scaling}, inducing derivative variables~\cite{Padidar2021ScalingGP}, and structured derivative covariance decompositions~\cite{Roos2021HighDimensionalGP}. These methods reduce the cost of inference in derivative GPs while keeping the joint function-gradient state as the main
computational object. We instead start from function-value prediction and 
use the target to determine how the observed gradients enter the relevant conditional distributions.

Gaussian conditioning turns this viewpoint into a precise reduction principle.
For a 
function-value target, 
after nearby function observations have been conditioned on, the observed gradients affect the target only through their contribution to its conditional mean and conditional variance.
Preserving this contribution is sufficient for the target prediction.
For a stationary prior kernel, the relevant contribution is not tied to the original $d$ gradient coordinates.
It is carried by linear 
functionals 
of observed gradients (i.e., directional derivatives) along directions spanned by
differences between the target input and the inputs where gradients are observed, with the kernel metric determining the geometry.
This reduces the derivative information needed for one scalar target, but it does not by itself control 
how many gradient locations enter the conditioning set.
\begin{figure}[tb]
    \centering
    \includegraphics[width=0.99\textwidth]{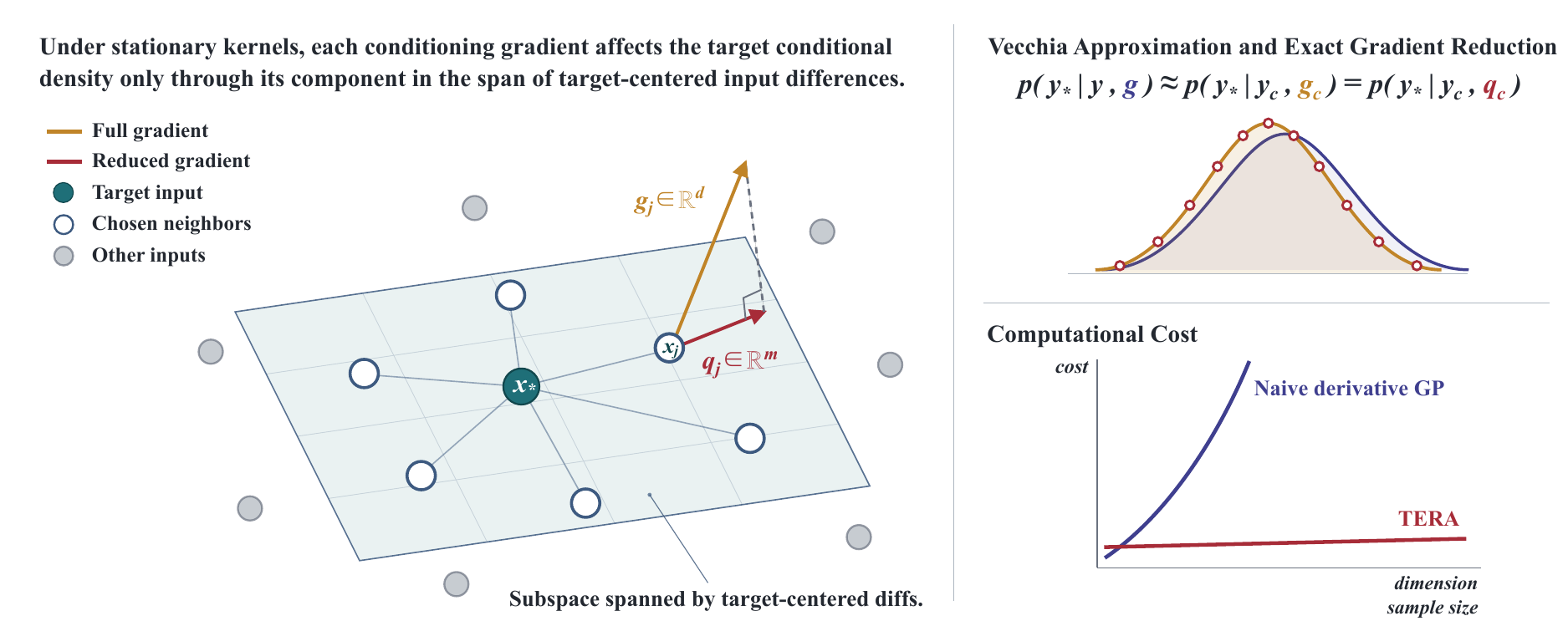}
    \caption{For predicting $y_*$ at input $\bx_*\in\bbR^{d}$ from observations $\by\in\bbR^n$ and gradients $\bg\in\bbR^{nd}$, \tera uses a Vecchia approximation to reduce the conditioning to a subset of $m$ nearest neighbors: $p(y_*|\by,\bg) \approx p(y_*|\by_c,\bg_c)$. Further, within each Vecchia conditional factor, \tera replaces the full local gradient $\bg_c\in\bbR^{md}$ by target-specific reduced gradients $\bq_c\in\bbR^{m^2}$ formed from its directional derivatives along the target-centered input differences. For stationary kernels, this preserves the full-gradient conditional density, leaving the Vecchia approximation as the only gap relative to exact derivative GP inference. \tera is highly scalable with respect to $d$ and $n$.}
    \label{fig:tera_overview}
\end{figure}

We control the number of conditioning locations by combining the reduction with Vecchia approximations~\cite{Vecchia1988EstimationAM,Stein2004,Datta2016,Katzfuss2020VecchiaAO,Katzfuss2021AGF,Schafer2020,Kang2021}. A target function value $f_\star$ is conditioned on $m$ neighboring locations rather than all observations, and the gradient reduction is applied within that conditioning set. The full gradients in the resulting Vecchia factor have dimension $md$, while the reduction for stationary kernels replaces them by at most $m^2$ 
target-specific gradient statistics.
This replacement 
gives the same conditional Gaussian distribution 
as the one obtained by conditioning on the full gradients in that set. 
These reduced conditional factors are used both for prediction and for evaluating the conditional Vecchia likelihood used to learn kernel and noise parameters.
With $m$ conditioning locations and $m\ll d$, each factor can be evaluated in $\order(dm^2+m^6)$ time, with only linear dependence on $d$ from reduced local block assembly.
For likelihood optimization, minibatching over Vecchia factors gives stochastic updates with cost $\order(|\mathcal B|(dm^2+m^6))$ for a minibatch $\mathcal B$, removing the explicit dependence on $n$ from each update when $|\mathcal B|$ is fixed. We refer to the resulting method as \tera, for Target-specific Exact gradient Reduction for Accelerating derivative GP inference. Figure~\ref{fig:tera_overview} illustrates the target-specific reduction inside a Vecchia factor and the resulting computational scaling of \tera.

The paper makes the following contributions.
\begin{itemize}[leftmargin=*,itemsep=1pt,topsep=2pt]
\item We prove an exact target-specific reduction for derivative GP conditionals. For stationary kernels, we show that a target function value conditioned on $m$ reference inputs depends on their $md$ gradient coordinates only through at most $m^2$ reduced gradients, under exact or metric-matched noisy gradient observations.

\item We turn this reduction into a scalable Vecchia derivative GP method. The reduced conditional covariance blocks are constructed directly from target-centered Gram matrices, avoiding the full $md \times md$ local derivative covariance. This yields per-factor cost $O(dm^2+m^6)$ and supports a conditional Vecchia learning objective over function values given observed gradients.

\item We show that the reduction preserves useful gradient information rather than merely lowering algebraic cost. Empirically, \tera matches the posterior accuracy of the corresponding full-gradient Vecchia derivative GP in fidelity tests, while turning gradients into better function-value prediction and lower BO regret at costs comparable to standard GP baselines.
\end{itemize}

\section{Related Work}
\label{sec:related}
\textbf{Scalable derivative GP inference. }
Recent scalable derivative GP methods reduce the cost of using derivative observations by changing the global covariance computation or posterior representation.
Kernel interpolation methods, including D-SKI~\cite{Eriksson2018ScalingGP} and DSoftKI~\cite{huang2026scaling}, accelerate derivative covariance computations through structured or learned interpolation, but their accuracy and fitting cost remain tied to a global interpolation approximation.
Inducing variable methods such as DSVGP and DDSVGP build sparse variational posterior approximations using derivative or directional derivative inducing variables~\cite{Padidar2021ScalingGP}, with directional derivatives reducing high-dimensional cost at the price of an additional directional approximation.
In contrast, we focus on settings where gradients are available but the inferential target is function-value prediction. In such settings, \tera expresses the gradient dependence of each chosen target conditional density through low-dimensional exact statistics, without introducing a global interpolation approximation or a variational inducing representation.

\textbf{Stationary kernel structures. }
The closest algebraic connection is de Roos et al.~\cite{Roos2021HighDimensionalGP}.
For stationary kernels, their decomposition expresses the gradient covariance matrix as the sum of a Kronecker product term and a low-rank correction, improving the scalability of exact 
derivative GP inference with respect to $d$ at the cost of  poor scalability with respect to $n$ ($\order(n^6)$ time).
In contrast, we use stationarity to identify the directions through which the conditioning gradients can affect a chosen target function value. When
applied inside Vecchia factors, the reduction makes the derivative solve depend on the conditioning set size $m$, rather than the total number of observations $n$, achieving scalability to both large $n$ and large $d$.

\textbf{Vecchia approximations for scalable GP inference. }
Vecchia approximations have been used in scalable GP inference~\cite{Cao2022ScalableGR,Cao2023VariationalSI} and BO~\cite{Jimenez2022ScalableBO}.
In existing methods, conditioning sets are chosen to approximate dependence among function-value observations.
A direct derivative GP analogue would still condition on all gradient coordinates from the chosen inputs.
We instead use each Vecchia factor with 
an exact lower-dimensional representation of the full gradient for the corresponding function-value prediction.

\section{Preliminaries}
\label{sec:prelim}
\subsection{Derivative GP Observation Model}
\label{sec:prelim_dgp}

We place a GP prior
$f(\cdot)\sim \GP(\mu_{\bftheta}(\cdot),k_{\bftheta}(\cdot,\cdot))$
on $\domain\subset\bbR^d$ and assume that $k_{\bftheta}$ is twice continuously differentiable in its input arguments.
For training inputs $\bX=(\bx_1,\ldots,\bx_n)^\top\in\bbR^{n\times d}$, write $f_i=f(\bx_i)$ and observe
$y_i=f_i+\epsilon_i^y$ and
$\bg_i=\nabla_{\bx}f(\bx_i)+\bfepsilon_i^g$.
The observation errors are independent across inputs and independent of the GP, 
with
$\epsilon_i^y\sim\normal(0,\sigma_y^2)$ and
$\bfepsilon_i^g\sim\normal(\bfzero,\sigma_g^2\bI_d)$.
We stack the observations as
$\by=(y_1,\ldots,y_n)^\top \in\bbR^n$ and
$\bg=(\bg_1^\top,\ldots,\bg_n^\top)^\top\in\bbR^{nd}$.

The pair $(\by,\bg)$ is jointly Gaussian,
\[
\begin{bmatrix}
\by \\
\bg
\end{bmatrix}
\sim
\normal\left(
\begin{bmatrix}
\bfmu_y \\
\bfmu_g
\end{bmatrix},
\begin{bmatrix}
\bK_{yy} & \bK_{yg} \\
\bK_{gy} & \bK_{gg}
\end{bmatrix}
\right),
\]
where $\bK_{yy}=\bK_{ff}+\sigma_y^2\bI_n$ and
$[\bK_{ff}]_{ij}=k_{\bftheta}(\bx_i,\bx_j)$.
The derivative covariance blocks are understood blockwise as
\[
[\bK_{yg}]_{i,j}
=
\nabla_{\bx_j} k_{\bftheta}(\bx_i,\bx_j)^\top,
\qquad
[\bK_{gg}]_{i,j}
=
\nabla_{\bx_i}\nabla_{\bx_j}^\top
k_{\bftheta}(\bx_i,\bx_j)
+
\mathbbm{1}\{i=j\}\sigma_g^2\bI_d.
\]
Exact gradient observations correspond to $\sigma_g^2=0$.

\subsection{Stationary Kernels and Target-Centered Differences}
\label{sec:prelim_stationary}

We use stationary kernels of the form
\begin{equation}
\label{eq:prelim_stationary_kernel}    
k_{\bftheta}(\bx,\bx')
=
\kappa_{\bftheta}\left((\bx-\bx')^\top\bfLambda_{\bftheta}(\bx-\bx')\right) = \kappa_{\bftheta}(\| \tilde\bx-\tilde\bx' \|^2),
\end{equation}
where $\bfLambda_{\bftheta}\succ0$ is diagonal and includes isotropic and ARD metrics~\cite{Rasmussen2005GaussianPF}, and $\tilde\bx=\bfLambda_{\bftheta}^{1/2}\bx$ denotes the scaled input.

For a target input $\bx_i$ and 
reference inputs $\bx_{j_1},\ldots,\bx_{j_m}$, define
\[
\bD_i
=
\left[
\bx_{j_1}-\bx_i,
\ldots,
\bx_{j_m}-\bx_i
\right]
\in\bbR^{d\times m},
\qquad
\bfDelta_i
=
\bfLambda_{\bftheta}^{1/2}\bD_i
=
\left[
\tilde\bx_{j_1}-\tilde\bx_i,
\ldots,
\tilde\bx_{j_m}-\tilde\bx_i
\right].
\]
The corresponding Gram matrix is
\[
\bH_i
=
\bfDelta_i^\top\bfDelta_i
=
\bD_i^\top\bfLambda_{\bftheta}\bD_i,
\qquad
[\bH_i]_{ab}
=
(\bx_{j_a}-\bx_i)^\top
\bfLambda_{\bftheta}
(\bx_{j_b}-\bx_i).
\]
Thus $\bfDelta_i$ contains the target-centered differences in scaled input coordinates.
The matrix $\bH_i$ contains their inner products, from which all pairwise squared distances among the target and reference inputs can be obtained.
For example,
$\|\tilde\bx_{j_a}-\tilde\bx_{j_b}\|^2
=
[\bH_i]_{aa}+[\bH_i]_{bb}-2[\bH_i]_{ab}$.

\subsection{Vecchia Ordering and Conditioning Sets}
\label{sec:prelim_vecchia}

Vecchia approximations~\cite{Vecchia1988EstimationAM,Katzfuss2021AGF} replace full chain-rule conditioning in an ordered Gaussian model by small conditioning sets.
For an ordered vector $\by=(y_1,\ldots,y_n)^\top$,
\[
p(\by)
=
\prod_{i=1}^n p(y_i \mid y_1,\ldots,y_{i-1})
\approx
\prod_{i=1}^n p\bigl(y_i \mid \by_{c(i)}\bigr),
\]
where $c(i)\subset\{1,\ldots,i-1\}$ and $|c(i)|\le m$.
We use a maximum-minimum-distance (maximin) ordering and take $c(i)$ to be the $m_i$ nearest previously ordered neighbors of $\bx_i$ in the $\bfLambda_{\bftheta}^{1/2}$-scaled input space, where $m_i=|c(i)|$, following scaled Vecchia constructions~\cite{Katzfuss2020ScaledVA}.

\section{Methods}
\label{sec:method}
\subsection{Targeted Gradient Reduction for Function Values}
\label{sec:method_targeted_reduction}

We first state the reduction for a generic finite reference set.
The result does not depend on how the reference set is chosen and will later be applied to Vecchia conditioning sets for prediction and parameter estimation.

Fix a target input $\bx_i\in\bbR^d$ and a 
reference set
$c=\{j_1,\ldots,j_{m_c}\}$ with $i\notin c$.
Let $\by_c=(y_{j_1},\ldots,y_{j_{m_c}})^\top$ and
$\bg_c=(\bg_{j_1}^\top,\ldots,\bg_{j_{m_c}}^\top)^\top\in\bbR^{m_c d}$.
Let $\bD_i\in\bbR^{d\times m_c}$ be the target-centered difference matrix from Section~\ref{sec:prelim_stationary} for the 
reference inputs in $c$.
Define the target-specific reduced gradient statistic $\bq_c$ as
\begin{equation}
\label{eq:reduced_gradient_statistic}
\bq_c
=
(\bI_{m_c}\otimes \bD_i^\top)\bg_c
=
\begin{bmatrix}
\bD_i^\top \bg_{j_1}\\
\vdots\\
\bD_i^\top \bg_{j_{m_c}}
\end{bmatrix}\in\bbR^{m_c^2},
\qquad
\bD_i^\top \bg_{j_a}
=
\begin{bmatrix}
(\bx_{j_1}-\bx_i)^\top \bg_{j_a}\\
\vdots\\
(\bx_{j_{m_c}}-\bx_i)^\top \bg_{j_a}
\end{bmatrix}
\in\bbR^{m_c}.
\end{equation}
The entries of $\bq_c$ can be viewed as unnormalized directional derivatives.
Each term $(\bx_{j_b}-\bx_i)^\top\bg_{j_a}$ is the inner product between the observed
gradient at $\bx_{j_a}$ and the target-centered difference $\bx_{j_b}-\bx_i$.
These directions are determined by the chosen target input and reference set, rather than
selected as global derivative directions.
When the target input and reference set are clear, we refer to $\bq_c$ simply as the reduced gradient.
The reduction is target-specific because $\bD_i$ depends on the target input $\bx_i$.

The basic object used by \tera is the conditional density
$p_{\bftheta}(y_i\mid \by_c,\bq_c)$.
The next lemma gives its Schur-complement form. All proofs can be found in Appendix~\ref{app:proofs_main}.

\begin{lemma}[Conditional density with reduced gradients]
\label{lem:reduced_gradient_factor}
Under the joint Gaussian model for $(y_i,\by_c,\bq_c)$,
\begin{equation}
\label{eq:reduced_gradient_factor}
p_{\bftheta}(y_i\mid\by_c,\bq_c)
=
\normal\left(
\mu_{y_i\mid \by_c}+t_i,\,
\sigma^2_{y_i\mid \by_c}-s_i
\right),
\end{equation}
where
\[
t_i
=
\bk_{y_i\bq_c\mid \by_c}
\bK_{\bq_c\bq_c\mid \by_c}^{-1}
\left(
\bq_c-\bfmu_{\bq_c\mid \by_c}
\right),
\qquad
s_i
=
\bk_{y_i\bq_c\mid \by_c}
\bK_{\bq_c\bq_c\mid \by_c}^{-1}
\bk_{\bq_c y_i\mid \by_c}.
\]
Here $\mu_{y_i\mid \by_c}$, $\sigma^2_{y_i\mid \by_c}$,
$\bfmu_{\bq_c\mid \by_c}$,
$\bk_{y_i\bq_c\mid \by_c}$, and
$\bK_{\bq_c\bq_c\mid \by_c}$ are Gaussian conditional moments after conditioning on $\by_c$.
\end{lemma}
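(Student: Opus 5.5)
The plan is to treat the statement as an instance of sequential (two-stage) Gaussian conditioning. First I will check that $(y_i,\by_c,\bq_c)$ is jointly Gaussian. This holds because $\bq_c=(\bI_{m_c}\otimes\bD_i^\top)\bg_c$ is a fixed linear map of $\bg_c$. The matrix $\bD_i$ depends only on the deterministic inputs, not on the data, and $(y_i,\by_c,\bg_c)$ is jointly Gaussian under the derivative GP observation model of Section~\ref{sec:prelim_dgp}. I will write down the joint moments explicitly:
\begin{itemize}
\item the mean of $\bq_c$ is $(\bI\otimes\bD_i^\top)\bfmu_{g_c}$;
\item its cross-covariance with $(y_i,\by_c)$ is the corresponding block of $\bK_{yg}$ right-multiplied by $(\bI\otimes\bD_i)$;
\item its own covariance is $(\bI\otimes\bD_i^\top)\bK_{g_cg_c}(\bI\otimes\bD_i)$.
\end{itemize}

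The core step is a two-stage conditioning. Conditioning the joint Gaussian on $\by_c$ gives a Gaussian law for $(y_i,\bq_c)$ given $\by_c$. Its mean is $(\mu_{y_i\mid\by_c},\bfmu_{\bq_c\mid\by_c})$ and its covariance blocks are $\sigma^2_{y_i\mid\by_c}$, $\bk_{y_i\bq_c\mid\by_c}$ and $\bK_{\bq_c\bq_c\mid\by_c}$. Each of these is the usual Schur-complement expression, for example $\bK_{\bq_c\bq_c\mid\by_c}=\bK_{\bq_c\bq_c}-\bK_{\bq_c\by_c}\bK_{y_cy_c}^{-1}\bK_{\by_c\bq_c}$. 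Conditioning this bivariate-block Gaussian further on $\bq_c$ then gives mean $\mu_{y_i\mid\by_c}+t_i$ and variance $\sigma^2_{y_i\mid\by_c}-s_i$, with exactly the stated $t_i$ and $s_i$. Since $p(y_i\mid\by_c,\bq_c)=p(y_i\mid \by_c,\bq_c)$ computed via $p(y_i,\bq_c\mid\by_c)/p(\bq_c\mid\by_c)$, the two-stage result equals the joint conditional. Equivalently, one can invoke the block-inverse (quotient) identity for Schur complements, namely that the Schur complement of $[\by_c;\bq_c]$ in the full covariance equals an iterated Schur complement.

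The main obstacle is invertibility. I need $\bK_{\by_c\by_c}=\bK_{ff,c}+\sigma_y^2\bI$ and $\bK_{\bq_c\bq_c\mid\by_c}$ to be nonsingular. The first is immediate when $\sigma_y^2>0$, or when the inputs are distinct and the kernel is strictly positive definite. The second can fail, for instance if $\bD_i$ has rank below $m_c$ (so that $\bq_c$ has linearly dependent entries), or in exact-gradient degenerate cases. I would handle this in one of two ways. The first is to assume, as a standing condition, that the conditional covariance is positive definite, which holds generically when $\sigma_g^2>0$ and $\bD_i$ has full column rank. The second is to replace the inverse by the Moore--Penrose pseudoinverse. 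Gaussian conditioning formulas remain valid with generalized inverses because $\bq_c-\bfmu_{\bq_c\mid\by_c}$ lies almost surely in the range of $\bK_{\bq_c\bq_c\mid\by_c}$, and $\bk_{y_i\bq_c\mid\by_c}$ lies in its row space. I would state the lemma under the nonsingularity assumption and remark on the pseudoinverse extension.
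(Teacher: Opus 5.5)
Your proposal is correct and follows the same route as the paper: condition on $\by_c$ to get a Gaussian law for $(y_i,\bq_c)$ with the stated conditional moments, then apply the Gaussian conditioning formula to $y_i$ given $\bq_c$ under that law. Your additional points are left implicit in the paper but are sound. These are the joint Gaussianity of $\bq_c$ as a fixed linear map of $\bg_c$, and the nonsingularity or pseudoinverse caveat for $\bK_{\bq_c\bq_c\mid\by_c}$, which genuinely matters when $\bD_i$ is rank deficient (e.g.\ $m_c>d$).
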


The terms $t_i$ and $s_i$ are the mean correction and variance reduction induced by conditioning on the reduced gradient.
For stationary kernels, we will show that this reduced gradient preserves the same conditional density obtained by conditioning on the full reference gradients. This lets us evaluate the same target conditional density using reduced covariance blocks (whose size does not depend on $d$) rather than forming and factorizing full derivative covariance blocks.

\begin{proposition}[Targeted gradient reduction]
\label{prop:targeted_gradient_reduction}
Assume the stationary kernel model in \eqref{eq:prelim_stationary_kernel}. 
Suppose that the observed gradients are either exact, so that $\sigma_g^2=0$, or have independent Gaussian noise matched to the scaled input metric, $\bfepsilon_i^g\sim\normal(\bfzero,\sigma_g^2\bfLambda_{\bftheta})$.
For any target input $\bx_i$ and any reference set
$c=\{j_1,\ldots,j_{m_c}\}$ with $i\notin c$,
\begin{equation}
\label{eq:reduced_gradient_exactness}
p_{\bftheta}(y_i\mid \by_c,\bg_c)
=
p_{\bftheta}(y_i\mid \by_c,\bq_c).
\end{equation}
Thus the full reference gradient vector in $\bbR^{m_c d}$ can be replaced by the reduced gradient $\bq_c\in\bbR^{m_c^2}$ without changing the conditional density for the chosen function-value target.
\end{proposition}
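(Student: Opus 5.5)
Since all variables are jointly Gaussian, it suffices to show that the component of $\bg_c$ not captured by $\bq_c$ is conditionally independent of $y_i$ given $(\by_c,\bq_c)$. I will first pass to scaled coordinates. Set $\tilde f(\tilde\bx)=f(\bfLambda_{\bftheta}^{-1/2}\tilde\bx)$, so that $\tilde f$ has the isotropic kernel $\kappa_{\bftheta}(\|\tilde\bx-\tilde\bx'\|^2)$. Define the scaled gradients $\tilde\bg_j=\bfLambda_{\bftheta}^{-1/2}\bg_j$; this is an invertible linear map of $\bg_c$. Then $\tilde\bg_j=\nabla\tilde f(\tilde\bx_j)+\tilde\bfepsilon_j$ with $\tilde\bfepsilon_j\sim\normal(\bfzero,\sigma_g^2\bI_d)$, using the metric-matched noise assumption (or $\sigma_g^2=0$). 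Moreover, $\bD_i^\top\bg_j=\bfDelta_i^\top\tilde\bg_j$, so $\bq_c=(\bI\otimes\bfDelta_i^\top)\tilde\bg_c$. Let $P$ be the orthogonal projector onto $\mathcal S=\operatorname{span}(\bfDelta_i)$ and $P^\perp=\bI-P$. Then $\tilde\bg_c$ is in bijection with the pair $\bigl((\bI\otimes P)\tilde\bg_c,(\bI\otimes P^\perp)\tilde\bg_c\bigr)$, and $(\bI\otimes P)\tilde\bg_c$ is a linear function of $\bq_c$. Conversely, $\bq_c$ is a function of that projected part, since $\bfDelta_i^\top P=\bfDelta_i^\top$. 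So $\sigma(\by_c,\bg_c)=\sigma(\by_c,\bq_c,\br)$ with $\br=(\bI\otimes P^\perp)\tilde\bg_c$.

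The key step is a covariance computation showing that $\br$ is uncorrelated with $(y_i,\by_c,\bq_c)$. The isotropic kernel derivatives give
$\operatorname{Cov}(\tilde f(\tilde\bx_a),\nabla\tilde f(\tilde\bx_b))=2\kappa'(\cdot)(\tilde\bx_a-\tilde\bx_b)$, together with
\[
\operatorname{Cov}(\nabla\tilde f(\tilde\bx_a),\nabla\tilde f(\tilde\bx_b))=-2\kappa'(\cdot)\bI_d-4\kappa''(\cdot)(\tilde\bx_a-\tilde\bx_b)(\tilde\bx_a-\tilde\bx_b)^\top .
\]
I will write every difference $\tilde\bx_a-\tilde\bx_b$ among points in $\{i\}\cup c$ as a difference of target-centered columns of $\bfDelta_i$, so each such difference lies in $\mathcal S$. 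It follows that $P^\perp$ annihilates the covariance of $\nabla\tilde f(\tilde\bx_j)$ with $y_i$ and with each $y_{j'}$, since the noise $\epsilon^y$ is independent. For the cross covariance between $\br_a=P^\perp\tilde\bg_{j_a}$ and $\bfDelta_i^\top\tilde\bg_{j_b}$, the rank-one term drops because $P^\perp$ kills vectors in $\mathcal S$. The identity term plus the noise leaves $c\,P^\perp\bfDelta_i=0$. Hence $\operatorname{Cov}(\br,(y_i,\by_c,\bq_c))=0$, and by joint Gaussianity $\br$ is independent of $(y_i,\by_c,\bq_c)$.

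The conclusion then follows: $p(y_i\mid\by_c,\bg_c)=p(y_i\mid\by_c,\bq_c,\br)=p(y_i\mid\by_c,\bq_c)$. If $\bq_c$ has a singular covariance, for instance when the columns of $\bfDelta_i$ are linearly dependent, Lemma~\ref{lem:reduced_gradient_factor} is read with a pseudo-inverse, or equivalently one conditions on a basis of $\mathcal S$; the $\sigma$-algebra argument is unaffected. The main point requiring care is the cross-gradient block. There, the identity term $-2\kappa'\bI$ and the noise $\sigma_g^2\bI$ must be \emph{isotropic} in the scaled coordinates, and this is exactly why metric-matched noise is required: with plain $\sigma_g^2\bI_d$ noise, the scaled noise would be $\sigma_g^2\bfLambda_{\bftheta}^{-1}$, and $P^\perp\bfLambda_{\bftheta}^{-1}\bfDelta_i\neq0$ in general. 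I will also check that the decomposition in the first paragraph is truly lossless, meaning the map $\bg_c\mapsto(\bq_c,\br)$ is injective and generates the same information.
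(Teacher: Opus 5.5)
Your proposal is correct and follows essentially the same route as the paper's proof. You pass to scaled coordinates, split each scaled reference gradient into its components in and orthogonal to $\operatorname{col}(\bfDelta_i)$, use the isotropic kernel-derivative identities (with metric-matched noise being isotropic in scaled coordinates) to show the orthogonal part is uncorrelated with everything else, and note that $\bq_c$ and the parallel part determine each other. The differences from the paper are cosmetic: an explicit projector, the noise handled in the same covariance computation, and a pseudo-inverse remark for rank-deficient $\bfDelta_i$. Note also that your function--gradient cross-covariance should read $2\kappa'(\cdot)(\tilde\bx_b-\tilde\bx_a)$, but this sign slip does not affect the argument.
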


The proof is given in Appendix~\ref{app:proof_prop1}, and it uses the target-centered differences defined in Section~\ref{sec:prelim_stationary}.
The matched noise condition is automatically satisfied by exact gradients and by isotropic kernels under the iid unscaled gradient noise model in Section~\ref{sec:prelim_dgp}.
For other noise models, conditioning on the reduced gradient still defines a valid Gaussian conditional density, but exact equivalence to full noisy gradient conditioning need not hold.
We keep the stacked form in \eqref{eq:reduced_gradient_statistic} because it avoids basis construction and leads directly to the reduced block formulas. 

\paragraph{Direct Construction and Scalability.}
The primary advantage of replacing the full reference gradient $\bg_c$ with the reduced gradient $\bq_c$ is computational. Evaluating the conditional density $p_{\bftheta}(y_i\mid\by_c,\bq_c)$ requires 
the conditional blocks in Lemma~\ref{lem:reduced_gradient_factor}.
A naive approach would form the full 
derivative covariance of $\bg_c$, 
an $m_c d\times m_c d$ matrix, and then apply the linear map in \eqref{eq:reduced_gradient_statistic}.
This route retains the full-gradient state and incurs a prohibitive $\order(m_c^3 d^3)$ dense factorization cost in large $d$.
Instead, we construct the reduced blocks directly from the stationary kernel geometry.

\begin{lemma}[Direct construction of reduced blocks]
\label{lem:direct_blocks}
For a target input $\bx_i$ and reference set $c$, the 
reduced 
blocks 
$\bfmu_{\bq_c\mid \by_c}$,
$\bk_{y_i\bq_c\mid \by_c}$, and
$\bK_{\bq_c\bq_c\mid \by_c}$
needed in Lemma~\ref{lem:reduced_gradient_factor} can be computed directly from the elements of the $m_c \times m_c$ target-centered Gram matrix $\bH_i=\bD_i^\top\bfLambda_{\bftheta}\bD_i$. This bypasses the instantiation of the full $m_c d \times m_c d$ derivative covariance, yielding the $m_c^2 \times m_c^2$ reduced blocks directly.
\end{lemma}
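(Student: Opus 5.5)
The plan is to compute every prior moment involving $\bq_c$ in closed form from the stationary kernel, and observe that each one depends on the inputs only through $\bH_i$ and the reference-to-target offsets. The conditional blocks then follow from the usual Gaussian conditioning on $\by_c$.

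Write $k(\bx,\bx')=\kappa(r)$ with $r=(\bx-\bx')^\top\bfLambda(\bx-\bx')$. Differentiating gives
\begin{align*}
\nabla_{\bx'}k(\bx,\bx') &= -2\kappa'(r)\,\bfLambda(\bx-\bx'),\\
\nabla_{\bx}\nabla_{\bx'}^\top k &= -2\kappa'(r)\,\bfLambda-4\kappa''(r)\,\bfLambda(\bx-\bx')(\bx-\bx')^\top\bfLambda .
\end{align*}
We then apply $\bD_i^\top$ on the gradient sides. Because $\bfLambda$ always sits between $\bD_i^\top$ and a difference vector, every resulting term is an inner product in the $\bfLambda$-metric.

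\begin{itemize}
\item \textbf{Cross-covariances.} For $\mathrm{Cov}(f(\bx),\bD_i^\top\nabla f(\bx_{j_a}))$ the key quantity is $\bD_i^\top\bfLambda(\bx-\bx_{j_a})$. Writing every difference through the target, $\bx_{j_b}-\bx_{j_a}=\bD_i(\be_b-\be_a)$ and $\bx_i-\bx_{j_a}=-\bD_i\be_a$. Hence this vector equals $\bH_i(\be_b-\be_a)$ or $-\bH_i\be_a$. The scalar arguments $r$ are also entries of $\bH_i$ combinations, using the distance identity already stated.
\item \textbf{Gradient--gradient block.} The $(a,a')$ block of $\mathrm{Cov}(\bq_c)$ is
\[
-2\kappa'(r)\,\bH_i-4\kappa''(r)\,\bH_i(\be_a-\be_{a'})(\be_a-\be_{a'})^\top\bH_i .
\]
\item \textbf{Noise.} Under matched noise $\sigma_g^2\bfLambda$, the diagonal blocks gain $\sigma_g^2\bH_i$.
\end{itemize}

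The mean $\bfmu_{\bq_c}=(\bI\otimes\bD_i^\top)\bfmu_g$ is handled separately. If $\mu_{\bftheta}$ is constant it vanishes. Otherwise it needs only $m_c$ inner products $\bD_i^\top\nabla\mu(\bx_{j_a})$, which cost $\order(dm_c^2)$.

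Finally, assemble $\bK_{y\bq}$, $\bK_{\bq\bq}$ and the target cross-covariance as $m_c^2$-sized arrays built from $\bH_i$ (formed once in $\order(dm_c^2)$) together with the scalar derivatives $\kappa',\kappa''$. Then apply the Schur complement on $\by_c$ to get
\[
\bK_{\bq_c\bq_c\mid\by_c}=\bK_{\bq\bq}-\bK_{\bq y}\bK_{yy}^{-1}\bK_{y\bq},
\]
with the analogous formulas for $\bk_{y_i\bq_c\mid\by_c}$ and $\bfmu_{\bq_c\mid\by_c}$. None of these steps ever forms an $m_cd\times m_cd$ matrix.

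The main obstacle I expect is bookkeeping for the target index. The target $\bx_i$ acts as the origin, so differences between two reference points and between a reference point and the target must both be expressed through columns of $\bD_i$. The sign conventions of $\nabla_{\bx}$ versus $\nabla_{\bx'}$ must also be tracked consistently.
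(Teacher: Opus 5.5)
Your proposal is correct and takes essentially the same approach as the paper. It uses the same stationary-kernel derivative identities, contracted with $\bD_i^\top$ and with every difference rewritten as $\bD_i(\be_b-\be_a)$ or $-\bD_i\be_a$, so it recovers exactly the paper's blocks $2\kappa'(r_{ab})(\bh_a-\bh_b)$, $2\kappa'(r_a)\bh_a$ and $-2\kappa'(r_{ab})\bH_i-4\kappa''(r_{ab})(\bh_a-\bh_b)(\bh_a-\bh_b)^\top$, followed by the same Schur complements on $\by_c$. The one difference is that the paper also records the noise term $\sigma_g^2\bD_i^\top\bD_i$ for iid unscaled gradient noise, which for ARD metrics needs the unscaled Gram matrix rather than $\bH_i$; your restriction to exact or metric-matched noise is precisely the setting where $\bH_i$ alone suffices.
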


The explicit algebraic formulas for these blocks are provided in Appendix~\ref{app:projected_blocks}. Crucially, the dense Cholesky factorization of the resulting reduced conditional covariance $\bK_{\bq_c\bq_c\mid \by_c}$ costs $\order(m_c^6)$. This eliminates the cubic dependence on the input dimension $d$ from the full 
derivative computation, allowing our method to scale to high-dimensional problems where standard derivative GPs can fail.

\subsection{Conditional Vecchia Likelihood with Reduced Gradients}
\label{sec:method_vecchia_learning}

We now apply the target-specific reduction from Section~\ref{sec:method_targeted_reduction} to the Vecchia conditioning sets from Section~\ref{sec:prelim_vecchia}.
For the $i$th training factor, write
$c(i)=\{j_1,\ldots,j_{m_i}\}$ with $m_i=|c(i)|$.
Let $\bD_i\in\bbR^{d\times m_i}$ be the target-centered difference matrix formed from $\bx_i$ and the ordered inputs in $c(i)$.
The reduced gradient vector for the $i$th factor is $
\bq_{c(i)}
=
(\bI_{m_i}\otimes \bD_i^\top)\bg_{c(i)}
\in\bbR^{m_i^2}$.
This is the specialization of $\bq_c$ in Section~\ref{sec:method_targeted_reduction} to the Vecchia set $c(i)$.
Define the conditional Vecchia density with reduced gradients as
\begin{equation}
\label{eq:conditional_vecchia_density}
\tilde p_{\bftheta}^{\mathrm{Vec}}(\by\mid\bg)
=
\prod_{i=1}^n
p_{\bftheta}\left(y_i\mid \by_{c(i)},\bq_{c(i)}\right).
\end{equation}
Each factor is a normalized Gaussian density in $y_i$.
Since $c(i)\subset\{1,\ldots,i-1\}$, sequential integration over $y_n,\ldots,y_1$ shows that \eqref{eq:conditional_vecchia_density} is a normalized conditional density over $\by$ for fixed observed gradients.

We estimate hyperparameters by maximizing
\begin{equation}
\label{eq:conditional_vecchia_loglik}
\tilde\ell_n(\bftheta)
= \log \tilde p_{\bftheta}^{\mathrm{Vec}}(\by\mid\bg) = 
\sum_{i=1}^n
\log p_{\bftheta}\left(y_i\mid \by_{c(i)},\bq_{c(i)}\right).
\end{equation}
The parameter vector $\bftheta$ includes the kernel parameters and the noise levels $\sigma_y^2$ and $\sigma_g^2$.
For fixed conditioning sets, the reduced gradients $\bq_{c(i)}$ are deterministic functions of the observed gradients and inputs.
The Gaussian covariance blocks used to evaluate each factor depend on $\bftheta$.
By applying Lemma~\ref{lem:direct_blocks} to the Vecchia conditioning sets with $m_i\le m$, the inverse computation in each factor costs at most $\order(m^6)$ via Cholesky factorization; importantly, this cost is completely independent of $d$.

\begin{proposition}[Score unbiasedness]
\label{prop:vecchia_score}
Assume the data are generated from the derivative GP model at parameter $\bftheta_0$.
Suppose the ordering and conditioning sets are fixed, and differentiation may be interchanged with integration.
Then the reduced conditional Vecchia score satisfies
\[
\mathbb E_{\bftheta_0}
\left[
\nabla_{\bftheta}\tilde\ell_n(\bftheta_0)
\right]
=
\bfzero.
\]
\end{proposition}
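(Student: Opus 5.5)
The plan is to show that each summand of $\tilde\ell_n$ has zero-mean score under $\bftheta_0$, using only that each factor is a correctly specified conditional Gaussian density. By linearity,
\[
\nabla_{\bftheta}\tilde\ell_n(\bftheta_0)=\sum_{i=1}^n \bigl.\nabla_{\bftheta}\log p_{\bftheta}\bigl(y_i\mid \by_{c(i)},\bq_{c(i)}\bigr)\bigr|_{\bftheta=\bftheta_0},
\]
so it suffices to show that each term has expectation zero. The key observation is that $p_{\bftheta}(y_i\mid \by_{c(i)},\bq_{c(i)})$ in Lemma~\ref{lem:reduced_gradient_factor} is not an arbitrary surrogate. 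It is the exact conditional density of $y_i$ given $(\by_{c(i)},\bq_{c(i)})$ under the derivative GP model at $\bftheta$, because $(y_i,\by_{c(i)},\bq_{c(i)})$ is a linear map of the jointly Gaussian $(\by,\bg)$. Note that $\bq_{c(i)}$ is a fixed linear transform of $\bg$, since the ordering, conditioning sets and $\bD_i$ do not depend on $\bftheta$. Hence, at $\bftheta=\bftheta_0$, this conditional density is the true conditional law of the data. I would stress that Proposition~\ref{prop:targeted_gradient_reduction} is not needed here: the argument works for the reduced conditional regardless of noise model, which is why I condition on $\bq_{c(i)}$ directly.

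For each $i$, write $Z_i=(\by_{c(i)},\bq_{c(i)})$ and apply the tower property:
\[
\mathbb E_{\bftheta_0}\bigl[\nabla_{\bftheta}\log p_{\bftheta_0}(y_i\mid Z_i)\bigr]
=\mathbb E_{\bftheta_0}\Bigl[\mathbb E_{\bftheta_0}\bigl[\nabla_{\bftheta}\log p_{\bftheta_0}(y_i\mid Z_i)\,\big|\,Z_i\bigr]\Bigr].
\]
For the inner expectation I use the conditional density itself:
\[
\int \nabla_{\bftheta}p_{\bftheta}(y_i\mid Z_i)\big|_{\bftheta_0}\,dy_i=\nabla_{\bftheta}\int p_{\bftheta}(y_i\mid Z_i)\,dy_i\Big|_{\bftheta_0}=\nabla_{\bftheta}1=\bfzero.
\]
This step uses the assumed interchange of differentiation and integration, and the fact that each factor is a normalized Gaussian density in $y_i$ with positive variance $\sigma^2_{y_i\mid\by_c}-s_i>0$ (guaranteed, for instance, when $\sigma_y^2>0$). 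Summing over $i$ then gives the claim.

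The main subtlety, and the step I would write most carefully, is that the product in \eqref{eq:conditional_vecchia_density} is not the true joint conditional $p(\by\mid\bg)$. The factors condition on different, non-nested statistics $\bq_{c(i)}$, so one cannot argue that the whole product is a likelihood whose score has mean zero. The proof must therefore be term-by-term, in the style of composite likelihood. Each term needs only the marginal correctness of its own conditional law at $\bftheta_0$, which holds regardless of the Vecchia approximation error; no sequential or martingale structure is needed. I would also note that integrability of the score follows from Gaussianity, since the score is a quadratic polynomial in the data with finite moments.
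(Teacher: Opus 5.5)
Your proposal is correct and follows the paper's proof essentially step for step. The shared steps are the term-by-term reduction, the tower property conditioning on $(\by_{c(i)},\bq_{c(i)})$, and the interchange of differentiation and integration applied to a normalized conditional density in $y_i$; your remarks on positivity of the conditional variance and on integrability of the quadratic score are small rigor refinements that the paper leaves implicit.
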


Proposition~\ref{prop:vecchia_score} shows that the score of
$\tilde\ell_n(\bftheta)$ is unbiased at the data generating parameter
$\bftheta_0$ for fixed ordering and conditioning sets, resulting in unbiased estimating equations even though the Vecchia product is not the full joint derivative GP likelihood.
In particular, optimization of \eqref{eq:conditional_vecchia_loglik}
targets the conditional distribution of function values given observed
gradients, without evaluating the marginal density of $\bg$. This aligns the learning objective with the inferential target of the paper, where gradients are used as conditioning information and prediction and downstream decisions depend on function values.

During each inner optimization stage, the ordering and conditioning sets remain fixed.
Since \eqref{eq:conditional_vecchia_loglik} is a sum over conditional factors, a minibatch $\mathcal B\subset\{1,\ldots,n\}$ gives the unbiased score estimator
\[
\widehat{\nabla}\tilde\ell_B(\bftheta)
=
\frac{n}{|\mathcal B|}
\sum_{i\in \mathcal B}
\nabla_{\bftheta}
\log p_{\bftheta}\left(y_i\mid \by_{c(i)},\bq_{c(i)}\right),
\]
following factor subsampling strategies for Vecchia GP likelihoods~\cite{Cao2022ScalableGR,Jimenez2022ScalableBO}.
When the scaled metric used for neighbor selection is updated, we optionally recompute the maximin ordering and nearest-neighbor conditioning sets and continue with a new fixed-set inner optimization stage, although this recomputing can be skipped for exponentially increasing numbers of iterations during optimization~\cite{Katzfuss2020ScaledVA}.

\subsection{Prediction and Computational Complexity}
\label{sec:prediction_complexity}

\textbf{Prediction. }
Prediction uses the same local reduced factor as training.
For a new input $\bx_\star$, choose a Vecchia conditioning set $c(\star)$ from the observed inputs (usually nearest neighbors) using the same scaled input geometry as in Section~\ref{sec:prelim_vecchia}.
Write $m_\star=|c(\star)|$ and form the target-centered difference matrix $\bD_\star\in\bbR^{d\times m_\star}$ from $\bx_\star$ and the ordered inputs in $c(\star)$.
The reduced gradient vector for the prediction factor is
\[
\bq_{c(\star)}
=
(\bI_{m_\star}\otimes\bD_\star^\top)\bg_{c(\star)}
\in\bbR^{m_\star^2}.
\]
We then evaluate
\[
p_{\bftheta}\left(f_\star\mid \by_{c(\star)},\bq_{c(\star)}\right).
\]
with the reduced conditional blocks constructed as in Appendix~\ref{app:projected_blocks}. The predictive variance is the latent variance of $f_\star$. If predicting a noisy future observation, $\sigma_y^2$ is added to this variance. When multiple prediction inputs use the same conditioning set, the Cholesky
factor of $\bK_{\by_c\by_c}$ can be reused. The reduced gradient and the reduced covariance blocks that depend on $\bD_\star$ remain target-specific and are recomputed for each prediction factor.

\textbf{Computational complexity. }
For a factor with at most $m$ conditioning inputs, direct construction of the reduced
blocks costs $\order(dm^2)$ time, while the dense Cholesky factorization of the reduced
conditional covariance costs $\order(m^6)$. 
Since the objective in \eqref{eq:conditional_vecchia_loglik} is a sum over Vecchia factors,
a stochastic update using a minibatch $\mathcal B$ costs
\[
\order\left(|\mathcal B|(dm^2+m^6)\right) \quad\text{time},\qquad
\order(m^2d + m^4)
\quad\text{memory}
\]
under sequential evaluation. For fixed $m$ and fixed minibatch size, each stochastic likelihood update is linear
in the input dimension $d$ and independent of $n$ once the ordering and conditioning sets are fixed. A full likelihood evaluation, or one complete pass over
all training factors, costs $\order\left(n(dm^2+m^6)\right)$.

\section{Experiments}
\label{sec:experiment}
In this section, we evaluate whether \tera preserves the predictive contribution of observed gradients while reducing the cost of derivative conditioning. We first use GP simulations, where exact derivative GP posterior marginals are available, to separate posterior fidelity from
model misspecification. We then evaluate large-scale GP regression, where exact derivative GP inference is no longer practical, and high-dimensional BO, where gradients inform the surrogate
but decisions depend only on function values. The experiments are implemented in a PyTorch~\cite{Paszke2019PyTorchAI}
code base using GPyTorch~\cite{Gardner2018GPyTorchBM} and
BoTorch~\cite{Balandat2019BoTorchPB} where applicable. Wall-clock time and peak GPU memory are measured on
a single NVIDIA H100 GPU. Experimental details are provided in
Appendix~\ref{app:experiment_detail}, and ablation studies are given in
Appendix~\ref{app:ablation}.

\subsection{Posterior Fidelity and Scaling}
\label{sec:exp_gp_prior_simulation}

\begin{figure}
    \centering
    \includegraphics[width=0.99\textwidth]{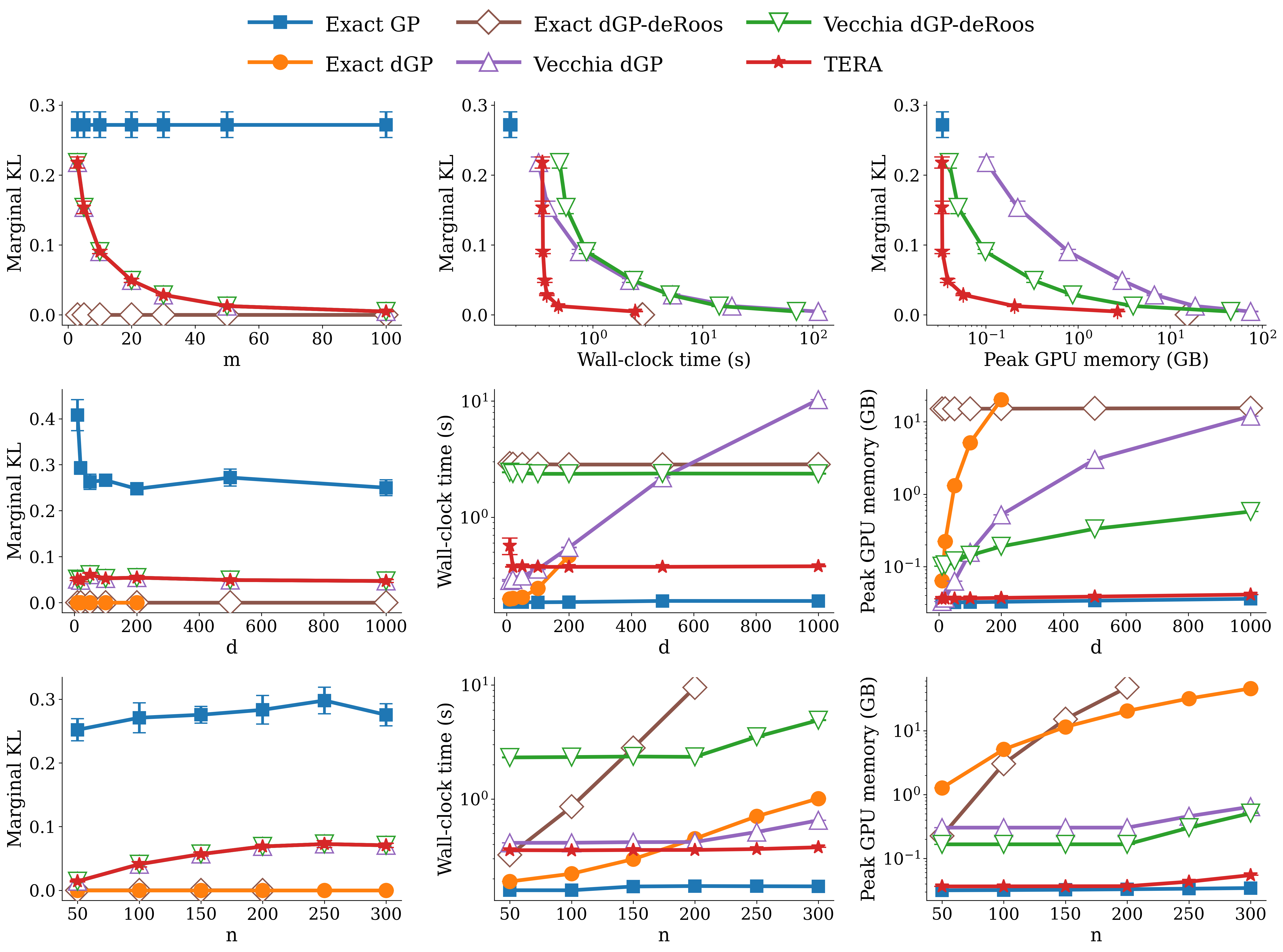}
    \caption{
    Posterior fidelity and computational scaling under GP simulations. Rows show sweeps over the conditioning size $m$ at $(d,n)=(500,150)$, input dimension $d$ at $(m,n)=(20,150)$, and training size $n$ at $d=150$ with $m=\max(20,n/10)$. Columns report average marginal KL divergence to the exact derivative GP posterior, wall-clock time, and peak GPU memory. All simulations use 100 common evaluation points and five different runs.
    }
    \label{fig:posterior_fidelity_scaling}
\end{figure}

We first evaluate posterior fidelity in a controlled GP-simulated setting.
Training and evaluation inputs are generated independently, and noiseless function values and gradients are sampled from an exact derivative GP with a Mat\'ern-5/2 kernel.
We fix the output scale to 1 and choose the isotropic lengthscale so that two training inputs at the median pairwise distance have prior correlation 0.3. This keeps the typical prior correlation comparable across dimensions, so that changes in posterior fidelity are not confounded by changes in effective prior smoothness as $d$ varies. Appendix~\ref{app:target_corr_ablation} shows that the posterior-fidelity pattern is robust to the choice of target prior correlation.
The exact derivative GP posterior is computed by dense algebra when feasible and by the de Roos decomposition~\cite{Roos2021HighDimensionalGP} otherwise.
We measure posterior fidelity by the average of marginal KL divergence $\KL(p_j^\star \,\|\, p_j^{\mathcal M})$, where $p_j^\star$ is the exact derivative GP marginal and $p_j^{\mathcal M}$ is the marginal from method $\mathcal M$.

We compare \tera with function-only Exact GP, dense Exact derivative GP (Exact dGP), Exact dGP-deRoos, full-gradient Vecchia dGP, and Vecchia dGP-deRoos.
Exact dGP-deRoos computes the same posterior as dense Exact dGP using the de Roos decomposition, while Vecchia dGP-deRoos is the full-gradient Vecchia counterpart accelerated by the same algebra.
The Vecchia baselines use the same ordering and conditioning sets as \tera but condition on full local gradients.

\textbf{\tera preserves the gradient information that closes the gap to Exact dGP. }
The left column of Figure~\ref{fig:posterior_fidelity_scaling} shows posterior approximation error relative to the exact derivative GP posterior.
Exact GP remains far from this reference, indicating that the observed gradients contribute substantial information to function-value prediction.
full-gradient Vecchia dGPs narrow the gap, with fidelity improving as $m$ increases.
Across all sweeps, \tera overlaps with the full-gradient Vecchia baselines, consistent with Proposition~\ref{prop:targeted_gradient_reduction}.
Thus, the remaining KL gap to Exact dGP is due to Vecchia approximations, not to replacing full local gradients by reduced gradients.

\textbf{\tera achieves Vecchia dGP fidelity at substantially lower cost. }
The middle and right columns show wall-clock time and peak GPU memory.
\tera matches the posterior fidelity of full-gradient Vecchia dGPs while running much faster in the $m$ sweep and remaining close to Exact GP timing across the $d$ and $n$ sweeps.
Its peak memory is comparable to Exact GP for $m<50$, while Vecchia dGP explicitly carries all $md$ local gradient coordinates.
Exact dGP becomes infeasible as $d$ or $n$ grows, and Exact dGP-deRoos shows rapidly increasing memory cost with $n$.

\subsection{Large-Scale GP Regression with Observed Gradients}

We evaluate large-scale GP regression on the MD22 molecular benchmark~\cite{Chmiela2022AccurateGM}, following the experimental setup of \cite{huang2026scaling}.
Inputs are molecular configurations, responses are scalar energies, and observed forces provide gradient information.
We use six benchmarks, DHA, AT-AT, Stachyose, AT-AT-CG-CG, Buckyball-catcher, and Double-walled-nanotube, spanning 4,528 to 62,777 samples and 168 to 1,110 input dimensions.
The prediction target is held-out scalar energy, while gradients are used only as conditioning information.

We compare \tera with Exact GP, DDSVGP~\cite{Padidar2021ScalingGP}, and DSoftKI~\cite{huang2026scaling} on identical train and test splits over three different split seeds.
For DDSVGP and DSoftKI, energies and forces are standardized using the joint energy-force scale from \cite{huang2026scaling} while \tera standardizes the energy target using the energy scale.
We therefore report raw energy RMSE per atom after converting predictions back to physical units. Since the DDSVGP implementation does not support ARD or Mat\'ern kernels, the main comparison uses squared exponential (SE) kernels without ARD for all methods. Additional results, including Mat\'ern-5/2 kernels and ARD, are provided in Appendix~\ref{app:additional_experiment}.

\begin{figure}[tb]
    \centering
    \includegraphics[width=0.99\textwidth]{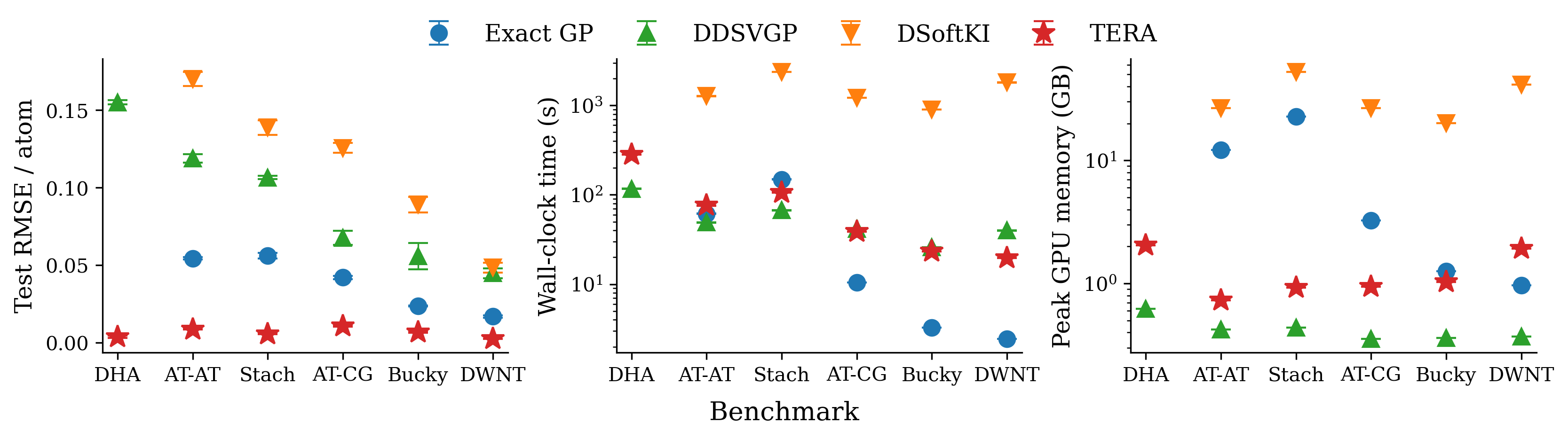}
    \caption{Large-scale scalar energy prediction on MD22. Panels show raw energy test RMSE per atom after conversion to physical units, end-to-end wall-clock time, and peak GPU memory. All methods use isotropic squared exponential kernels and identical train and test splits. Results are averaged over three independent runs. Lower values are better in all panels.}
    \label{fig:md22_main_metrics_rbf}
\end{figure}

\textbf{\tera achieves the strongest prediction accuracy with a lightweight learning problem.}
Figure~\ref{fig:md22_main_metrics_rbf} shows that \tera gives the lowest raw energy RMSE per atom across the MD22 systems.
DDSVGP and DSoftKI use gradients through larger approximate full-derivative GP objectives, whereas \tera learns only the kernel and noise parameters in the conditional Vecchia objective.
Empirically, one pass over the training factors is sufficient for \tera to obtain the reported accuracy, while DDSVGP and DSoftKI need to be trained for multiple epochs. See Figure~\ref{fig:md22_training_curves_rbf} in Appendix~\ref{app:additional_experiment}.

\textbf{\tera remains computationally practical.}
The timing and memory panels show that the accuracy gain from observed gradients remains practical in both large sample and high dimensional regimes.
On DHA, the largest system with 62,777 samples and 168 dimensions, \tera scales to the full training set while using gradients, whereas Exact GP and DSoftKI do not complete under the same resource budget.
On Double-walled-nanotube, the highest dimensional system with 4,528 samples and 1,110 dimensions, \tera remains memory-efficient despite the large gradient dimension.

\subsection{High-Dimensional Bayesian Optimization}
\label{sec:bo_experiments}

We provide BO experiments as a downstream test of whether \tera's surrogate-level gradient reduction translates into better high-dimensional scalar objective optimization under a given query budget.
We consider a setting in which each query returns a scalar objective value and its full gradient.
We use synthetic benchmarks whose objectives depend on all input dimensions, rather than benchmarks with a small embedded active subspace.
We use two full-dimensional synthetic benchmarks, Ackley-200D and Levy-200D, covering rugged and multimodal objectives. Results are averaged over five independent runs with different initial designs.
We report simple regret against the number of function queries, together with end-to-end BO wall-clock time and peak GPU memory.

We compare with Sobol, dimension-scaled vanilla BO (VBO)~\cite{Hvarfner2024VanillaBO}, and TuRBO-1~\cite{Eriksson2019ScalableGO}.
Sobol provides a nonadaptive space-filling baseline. VBO is a modern global GP baseline for high-dimensional BO that uses lengthscale priors scaled with the input dimension, while TuRBO-1 is a strong local trust-region baseline.
We use TuRBO-1 rather than multi-region variants to keep a sequential evaluation with one new query per BO iteration, without introducing allocation across multiple trust regions. All adaptive BO methods use LogEI~\cite{Ament2023UnexpectedIT}, with TuRBO-1 optimizing it inside the current trust region, so the acquisition rule is aligned across GP surrogates.
An additional Ackley-200D comparison using the original large batch TuRBO setup is provided in Appendix~\ref{app:additional_experiment}.

\begin{figure}[tb]
    \centering
    \includegraphics[width=0.98\textwidth]{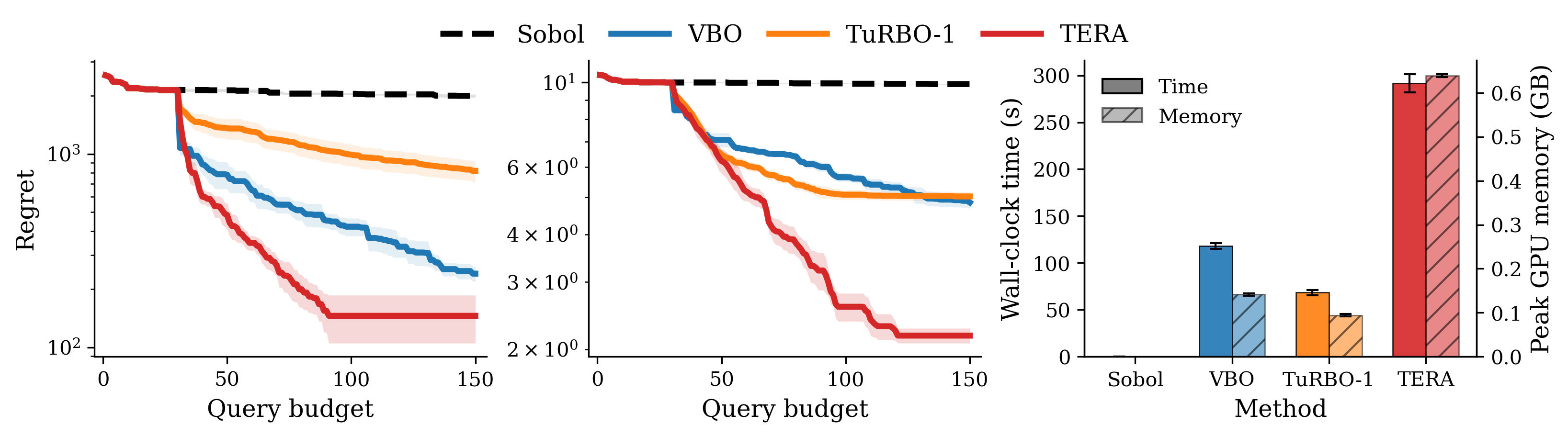}
    \caption{High-dimensional BO under a fixed budget of 150 objective evaluations. The first two panels report simple regret on full-dimensional Levy-200D and Ackley-200D, respectively. Curves show means and standard errors over five runs. The right panel reports wall-clock time and peak GPU memory on Ackley-200D, with error bars showing standard errors.}
    \label{fig:bo_main_regret}
\end{figure}

\textbf{\tera lowers regret at practical BO cost.}
Figure~\ref{fig:bo_main_regret} shows that \tera obtains the lowest final regret on both high-dimensional benchmarks under the same query budget, converting available gradient information into substantially better BO decisions.
On Levy-200D, the gap is large. VBO and TuRBO-1 improve over Sobol after the initial design, but their progress slows at substantially higher regret, while \tera continues to make larger improvements before stabilizing at a lower level.
On Ackley-200D, the same ordering appears even more clearly.
The value-only BO baselines reduce regret, but they plateau well above \tera, indicating that their scalar observations alone do not provide enough local information to guide the search as effectively in this 200-dimensional space.

The resource panel shows that \tera uses more computation than value-only BO, but the additional cost is moderate relative to the improvement in final regret. For example, the full Ackley-200D run still takes only about five minutes (i.e., roughly 2 seconds on average per query update) and remains below one GB of peak GPU memory.

\subsection{Discussion and Limitations}
\label{sec:limitations}

\tera reduces the dependence on the input dimension, but the reduced local solve still depends on the Vecchia conditioning size.
With $m$ conditioning inputs, the reduced conditional covariance has size at most $m^2\times m^2$, so each Vecchia factor includes an $\order(m^6)$ dense Cholesky term.
The method can therefore be computationally costly when accurate prediction requires large conditioning sets.
Our experiments show robust performance with small conditioning sets, around $m=20$, across the tested GP simulations, GP regression tasks, and BO downstream tasks, but this should not be read as a universal guarantee.
Additional ablations on the effect of $m$ are discussed in Appendix~\ref{app:effect_m}.

For general anisotropic or correlated gradient noise, conditioning on the reduced gradients still defines a valid Gaussian conditional factor, but it need not match conditioning on the full noisy gradient vector.
Our experiments focus on noise settings covered by Proposition~\ref{prop:targeted_gradient_reduction}. The main GP regression experiments use iid unscaled gradient noise with an isotropic SE kernel, and the ARD Mat\'ern-5/2 experiments in Appendix~\ref{app:md22_additional_experiment} use metric-matched gradient noise.
Exact reductions for arbitrary gradient noise remain outside the scope of this work.

\section{Conclusions}
\label{sec:conclusion}
We introduced \tera, a scalable derivative GP method that uses observed gradients as conditioning information for function-value prediction, rather than treating the full derivative state as the main computational object. For stationary kernels, each local function-value conditional depends on the full gradients only through target-specific reduced gradients, which can be embedded in Vecchia factors without changing the corresponding local conditional distribution. 
Empirically, the reduction matches the posterior fidelity of full-gradient Vecchia derivative GP inference and turns observed gradients into improved function prediction and lower high-dimensional BO regret at practical computational cost.
The results suggest that scalable derivative GP inference should be organized around the predictive target, not around all available derivative coordinates.

\bibliographystyle{plain}
\bibliography{main}


\appendix
\clearpage
\appendix
\section*{Appendix}
\addcontentsline{toc}{section}{Appendix}

\section{Proofs for Section~\ref{sec:method}}
\label{app:proofs_main}

\subsection{Proof of Lemma~\ref{lem:reduced_gradient_factor}}

\begin{proof}
Condition on $\by_c$.
Under the joint Gaussian model, $(y_i,\bq_c)$ remains jointly Gaussian with conditional mean
\[
\begin{bmatrix}
\mu_{y_i\mid \by_c}\\
\bfmu_{\bq_c\mid \by_c}
\end{bmatrix}
\]
and conditional covariance
\[
\begin{bmatrix}
\sigma^2_{y_i\mid \by_c}
&
\bk_{y_i\bq_c\mid \by_c}
\\
\bk_{\bq_c y_i\mid \by_c}
&
\bK_{\bq_c\bq_c\mid \by_c}
\end{bmatrix}.
\]
Applying the Gaussian conditioning formula to $y_i$ given $\bq_c$ under this conditional Gaussian distribution gives
\[
p_{\bftheta}(y_i\mid \by_c,\bq_c)
=
\normal\left(
\mu_{y_i\mid \by_c}
+
\bk_{y_i\bq_c\mid \by_c}
\bK_{\bq_c\bq_c\mid \by_c}^{-1}
(\bq_c-\bfmu_{\bq_c\mid \by_c}),
\,
\sigma^2_{y_i\mid \by_c}
-
\bk_{y_i\bq_c\mid \by_c}
\bK_{\bq_c\bq_c\mid \by_c}^{-1}
\bk_{\bq_c y_i\mid \by_c}
\right).
\]
This is exactly \eqref{eq:reduced_gradient_factor}.
\end{proof}

\subsection{Proof of Proposition~\ref{prop:targeted_gradient_reduction}}
\label{app:proof_prop1}

\begin{proof}
Fix a target input $\bx_i$ and an ordered reference set
$c=\{j_1,\ldots,j_{m_c}\}$.
Work in scaled input coordinates
\[
\tilde\bx=\bfLambda_{\bftheta}^{1/2}\bx.
\]
For gradients, write
\[
\tilde\nabla f(\tilde\bx)
=
\bfLambda_{\bftheta}^{-1/2}\nabla_{\bx}f(\bx),
\qquad
\tilde\bg_j
=
\bfLambda_{\bftheta}^{-1/2}\bg_j.
\]
Under exact gradient observations, $\tilde\bg_j=\tilde\nabla f(\tilde\bx_j)$.
Let
\[
\mathcal S_i
=
\operatorname{span}\{\tilde\bx_{j_a}-\tilde\bx_i:a=1,\ldots,m_c\}
=
\operatorname{col}(\bfDelta_i).
\]
Every difference between two inputs in $\{i\}\cup c$ lies in $\mathcal S_i$, since
\[
\tilde\bx_{j_a}-\tilde\bx_{j_b}
=
(\tilde\bx_{j_a}-\tilde\bx_i)
-
(\tilde\bx_{j_b}-\tilde\bx_i).
\]
Let $\mathcal S_i^\perp$ denote the orthogonal complement of $\mathcal S_i$ in scaled coordinates.
For the stationary kernel
\[
k_{\bftheta}(\bx_a,\bx_b)
=
\kappa_{\bftheta}\left(\|\tilde\bx_a-\tilde\bx_b\|^2\right),
\]
write $\br_{ab}=\tilde\bx_a-\tilde\bx_b$ and $r_{ab}=\|\br_{ab}\|^2$.
The required derivative identities are
\[
\nabla_{\tilde\bx_a} k_{\bftheta}(\bx_a,\bx_b)
=
2\kappa_{\bftheta}'(r_{ab})\br_{ab},
\qquad
\nabla_{\tilde\bx_b} k_{\bftheta}(\bx_a,\bx_b)
=
-2\kappa_{\bftheta}'(r_{ab})\br_{ab},
\]
and
\begin{equation}
\label{eq:mixed_derivative_identity}
\nabla_{\tilde\bx_a}\nabla_{\tilde\bx_b}^{\top} k_{\bftheta}(\bx_a,\bx_b)
=
-2\kappa_{\bftheta}'(r_{ab})\bI_d
-
4\kappa_{\bftheta}''(r_{ab})\br_{ab}\br_{ab}^{\top}.
\end{equation}
Take any $a,b\in\{i\}\cup c$ and any direction $\bu\in\mathcal S_i^\perp$.
Since $\br_{ab}\in\mathcal S_i$,
\[
\cov_{\bftheta}\left(
\bu^\top\tilde\nabla f(\tilde\bx_a),
f(\bx_b)
\right)
=
\bu^\top
\nabla_{\tilde\bx_a}k_{\bftheta}(\bx_a,\bx_b)
=
0.
\]
The same equality holds with $f(\bx_b)$ replaced by $y_b$, because the function observation noise is independent of the Gaussian process and of the gradients. 

Next take any $\bv\in\mathcal S_i$.
Using \eqref{eq:mixed_derivative_identity},
\[
\cov_{\bftheta}\left(
\bu^\top\tilde\nabla f(\tilde\bx_a),
\bv^\top\tilde\nabla f(\tilde\bx_b)
\right)
=
\bu^\top
\left[
-2\kappa_{\bftheta}'(r_{ab})\bI_d
-
4\kappa_{\bftheta}''(r_{ab})\br_{ab}\br_{ab}^{\top}
\right]
\bv
=
0.
\]
The identity term vanishes because $\bu\perp\bv$.
The rank-one term vanishes because $\bu\perp\br_{ab}$.

Decompose each scaled reference gradient into its components in $\mathcal S_i$ and $\mathcal S_i^\perp$.
Let $\tilde\bg_{c,\parallel}$ collect the components in $\mathcal S_i$, and let $\tilde\bg_{c,\perp}$ collect the components in $\mathcal S_i^\perp$.
The covariance calculations above imply
\[
\cov_{\bftheta}\left(
\tilde\bg_{c,\perp},
(y_i,\by_c,\tilde\bg_{c,\parallel})
\right)
=
\bfzero.
\]
Since all variables are jointly Gaussian,
\[
p_{\bftheta}
\left(
y_i\mid \by_c,\tilde\bg_{c,\parallel},\tilde\bg_{c,\perp}
\right)
=
p_{\bftheta}
\left(
y_i\mid \by_c,\tilde\bg_{c,\parallel}
\right).
\]

It remains to show that $\bq_c$ determines $\tilde\bg_{c,\parallel}$. For any scaled reference gradient $\tilde\bg_j$,
\[
\bfDelta_i^\top\tilde\bg_j
=
\bD_i^\top\bg_j.
\]
The map $\bv\mapsto \bfDelta_i^\top\bv$ is one-to-one on $\mathcal S_i$.
Indeed, if $\bv\in\mathcal S_i$ and $\bfDelta_i^\top\bv=\bfzero$, then $\bv=\bfDelta_i\ba$ for some $\ba$ and
\[
\|\bv\|^2
=
\ba^\top\bfDelta_i^\top\bfDelta_i\ba
=
\ba^\top\bfDelta_i^\top\bv
=
0.
\]
Thus $\bv=\bfzero$.
Therefore
\[
\bq_c
=
(\bI_{m_c}\otimes\bD_i^\top)\bg_c
=
(\bI_{m_c}\otimes\bfDelta_i^\top)\tilde\bg_c
\]
determines $\tilde\bg_{c,\parallel}$, and conversely $\bq_c$ is a deterministic function of $\tilde\bg_{c,\parallel}$.
Thus conditioning on $\bq_c$ is equivalent to conditioning on $\tilde\bg_{c,\parallel}$.
Combining this equivalence with the conditional irrelevance of $\tilde\bg_{c,\perp}$ gives
\[
\begin{aligned}
p_{\bftheta}(y_i\mid\by_c,\bg_c)
&=
p_{\bftheta}(y_i\mid\by_c,\tilde\bg_{c,\parallel},\tilde\bg_{c,\perp})\\
&=
p_{\bftheta}(y_i\mid\by_c,\tilde\bg_{c,\parallel})\\
&=
p_{\bftheta}(y_i\mid\by_c,\bq_c).
\end{aligned}
\]

It remains to verify that the matched noise model preserves the same decomposition. 
Let $\mathcal S_i=\operatorname{col}(\bfDelta_i)$ in scaled input coordinates and write
$\tilde\bg_c=\tilde\bg_{c,\parallel}+\tilde\bg_{c,\perp}$ for the decomposition of the observed scaled gradients into components in $\mathcal S_i$ and $\mathcal S_i^\perp$.
For exact gradients, the proof above shows that $\tilde\bg_{c,\perp}$ is uncorrelated with $(y_i,\by_c,\tilde\bg_{c,\parallel})$.

Now suppose the gradient noise is scalar in scaled coordinates,
\[
\tilde\bfepsilon_j^g
=
\bfLambda_{\bftheta}^{-1/2}\bfepsilon_j^g
\sim
\normal(\bfzero,\sigma_g^2\bI_d),
\]
independently across inputs.
For any $\bu\in\mathcal S_i^\perp$ and $\bv\in\mathcal S_i$,
\[
\cov\left(
\bu^\top\tilde\bfepsilon_a^g,
\bv^\top\tilde\bfepsilon_b^g
\right)
=
\mathbbm{1}\{a=b\}\sigma_g^2\bu^\top\bv
=
0.
\]
Thus the noise also preserves the decomposition into $\mathcal S_i$ and $\mathcal S_i^\perp$.
Since the noise is independent of the GP and all variables are jointly Gaussian,
\[
\tilde\bg_{c,\perp}
\quad\text{is independent of}\quad
(y_i,\by_c,\tilde\bg_{c,\parallel}).
\]
The same argument as in the exact gradient case gives
\[
p_{\bftheta}(y_i\mid\by_c,\bg_c)
=
p_{\bftheta}(y_i\mid\by_c,\bq_c).
\]

The scaled noise condition above is equivalent to
$\bfepsilon_j^g\sim\normal(\bfzero,\sigma_g^2\bfLambda_{\bftheta})$
in the original gradient coordinates.
Thus ARD metrics are covered by the exactness result when the gradient noise covariance is matched to the metric.

By contrast, under iid unscaled gradient noise,
\[
\bfepsilon_j^g\sim\normal(\bfzero,\sigma_g^2\bI_d),
\qquad
\tilde\bfepsilon_j^g
\sim
\normal(\bfzero,\sigma_g^2\bfLambda_{\bftheta}^{-1}).
\]
This scaled covariance is scalar only when $\bfLambda_{\bftheta}\propto\bI_d$, apart from the exact gradient case $\sigma_g^2=0$.
Hence, for non-isotropic ARD metrics with iid unscaled gradient noise, the exact equality need not hold because the perpendicular noisy component can carry information about the parallel noisy component through the noise covariance.
The reduced noisy gradient still defines a valid Gaussian conditional density, but it is not generally identical to conditioning on the full noisy gradient in this case.
\end{proof}

\subsection{Proof of Lemma~\ref{lem:direct_blocks}: Explicit Construction of Reduced Conditional Blocks} 
\label{app:projected_blocks}

\begin{proof}
We provide the explicit algebraic formulas for the reduced blocks required in Lemma~\ref{lem:direct_blocks}. 
Fix a target input $\bx_i$ and an 
reference set $c=\{j_{1},\ldots,j_{m_c}\}$.
Let $\bH_i=\bD_i^\top\bfLambda_{\bftheta}\bD_i\in\bbR^{m_c\times m_c}$ be the target-centered Gram matrix from Section~\ref{sec:prelim_stationary}, and let $\be_a$ denote the $a$th Euclidean basis vector in $\bbR^{m_c}$.
Write, for $a,b=1,\ldots,m_c$,
\[
\begin{aligned}
\bh_a
&=
\bH_i\be_a
=
\bfDelta_i^\top(\tilde\bx_{j_{a}}-\tilde\bx_i)
\in\bbR^{m_c}, \\
r_{a}
&=
(\bx_i-\bx_{j_{a}})^\top
\bfLambda_{\bftheta}
(\bx_i-\bx_{j_{a}})
=
[\bH_i]_{aa},\\
r_{ab}
&=
(\bx_{j_{a}}-\bx_{j_{b}})^\top
\bfLambda_{\bftheta}
(\bx_{j_{a}}-\bx_{j_{b}})
=
[\bH_i]_{aa}+[\bH_i]_{bb}-2[\bH_i]_{ab}.
\end{aligned}
\]
Here $r_{a}$ and $r_{ab}$ are the stationary kernel inputs for target-reference and reference-reference pairs, respectively.
Let
\[
\bq_{a}
=
\bD_i^\top\bg_{j_a}
\in\bbR^{m_c},
\qquad
a=1,\ldots,m_c,
\]
so that $\bq_c = (\bq_{1}^\top,\ldots,\bq_{m_c}^\top)^\top$.
The required unconditioned reduced blocks are obtained directly as follows.
For a reference function value $y_{j_{b}}$,
\[
\bk_{\bq_{a} y_{j_{b}}}
=
2\kappa_{\bftheta}'(r_{ab})(\bh_a-\bh_b).
\]
For the target function value $y_i$,
\[
\bk_{\bq_{a} y_i}
=
2\kappa_{\bftheta}'(r_{a})\bh_a.
\]
For the covariance between the projected gradients at reference inputs $j_{a}$ and $j_{b}$,
\[
\bK_{\bq_{a}\bq_{b}}
=
-2\kappa_{\bftheta}'(r_{ab})\bH_i
-
4\kappa_{\bftheta}''(r_{ab})
(\bh_a-\bh_b)(\bh_a-\bh_b)^\top
+
\mathbbm{1}\{a=b\}\sigma_g^2\bD_i^\top\bD_i \in \bbR^{m_c \times m_c}.
\]
For exact gradients, the final noise term disappears. Under the scaled-coordinate noise model, the final term is instead $\mathbbm{1}\{a=b\}\sigma_g^2\bH_i$.
Stacking these blocks over $a,b=1,\ldots,m_c$ gives $\bK_{\bq_c\by_c}\in\bbR^{m_c^2\times m_c}$, $\bk_{\bq_c y_i}\in\bbR^{m_c^2}$, and $\bK_{\bq_c\bq_c}\in\bbR^{m_c^2\times m_c^2}$.

The projected mean is
\[
\bfmu_{\bq_c}
=
(\bI_{m_c}\otimes\bD_i^\top)\bfmu_{\bg_c},
\qquad
\bfmu_{\bg_c}
=
\left(
\nabla\mu_{\bftheta}(\bx_{j_{1}})^\top,
\ldots,
\nabla\mu_{\bftheta}(\bx_{j_{m_c}})^\top
\right)^\top.
\]

The conditional quantities used in evaluating the conditional density are ordinary Gaussian Schur complements with respect to $\by_c$:
\[
\bK_{\bq_c\bq_c\mid \by_c}
=
\bK_{\bq_c\bq_c}
-
\bK_{\bq_c\by_c}
\bK_{\by_c\by_c}^{-1}
\bK_{\by_c\bq_c},
\]
\[
\bk_{\bq_c y_i\mid \by_c}
=
\bk_{\bq_c y_i}
-
\bK_{\bq_c\by_c}
\bK_{\by_c\by_c}^{-1}
\bk_{\by_c y_i},
\]
and
\[
\bfmu_{\bq_c\mid \by_c}
=
\bfmu_{\bq_c}
+
\bK_{\bq_c\by_c}
\bK_{\by_c\by_c}^{-1}
(\by_c-\bfmu_{\by_c}).
\]
Here $\bK_{\by_c\by_c}\in\bbR^{m_c \times m_c}$ includes the function observation noise. These formulas provide all matrix and vector quantities needed for the corrections without materializing the $m_c d\times m_c d$ local derivative covariance.
\end{proof}

\subsection{Proof of Proposition~\ref{prop:vecchia_score}}
\label{app:vecchia_score}

\begin{proof}
Let
\[
\ell_i(\bftheta)
=
\log p_{\bftheta}\left(y_i\mid \by_{c(i)},\bq_{c(i)}\right),
\qquad
\tilde\ell_n(\bftheta)
=
\sum_{i=1}^n \ell_i(\bftheta).
\]
The ordering and conditioning sets are fixed.
Therefore $\bq_{c(i)}$ is a deterministic function of the observed gradients and inputs for each $i$.

It is enough to show that each local score has mean zero at $\bftheta_0$.
By the tower property,
\[
\bbE_{\bftheta_0}
\left[
\nabla_{\bftheta}\ell_i(\bftheta_0)
\right]
=
\bbE_{\bftheta_0}
\left[
\bbE_{\bftheta_0}
\left[
\nabla_{\bftheta}
\log p_{\bftheta}
\left(y_i\mid \by_{c(i)},\bq_{c(i)}\right)
\Big|_{\bftheta=\bftheta_0}
\mid
\by_{c(i)},\bq_{c(i)}
\right]
\right].
\]
Conditioning on $(\by_{c(i)},\bq_{c(i)})$, the inner expectation is
\[
\int
\nabla_{\bftheta}
\log p_{\bftheta}
\left(y_i\mid \by_{c(i)},\bq_{c(i)}\right)
\Big|_{\bftheta=\bftheta_0}
p_{\bftheta_0}
\left(y_i\mid \by_{c(i)},\bq_{c(i)}\right)
\,dy_i.
\]
Using
\[
\nabla_{\bftheta}
\log p_{\bftheta}
\left(y_i\mid \by_{c(i)},\bq_{c(i)}\right)
=
\frac{
\nabla_{\bftheta}
p_{\bftheta}
\left(y_i\mid \by_{c(i)},\bq_{c(i)}\right)
}{
p_{\bftheta}
\left(y_i\mid \by_{c(i)},\bq_{c(i)}\right)
},
\]
the inner expectation equals
\[
\int
\nabla_{\bftheta}
p_{\bftheta}
\left(y_i\mid \by_{c(i)},\bq_{c(i)}\right)
\Big|_{\bftheta=\bftheta_0}
\,dy_i.
\]
By the assumed interchange of differentiation and integration,
\[
\int
\nabla_{\bftheta}
p_{\bftheta}
\left(y_i\mid \by_{c(i)},\bq_{c(i)}\right)
\Big|_{\bftheta=\bftheta_0}
\,dy_i
=
\nabla_{\bftheta}
\int
p_{\bftheta}
\left(y_i\mid \by_{c(i)},\bq_{c(i)}\right)
\,dy_i
\Big|_{\bftheta=\bftheta_0}.
\]
The integral is one for every $\bftheta$, since the integrand is a conditional density in $y_i$.
Therefore the last term is zero, and hence
\[
\bbE_{\bftheta_0}
\left[
\nabla_{\bftheta}\ell_i(\bftheta_0)
\right]
=
\bfzero.
\]
Summing over $i$ gives
\[
\bbE_{\bftheta_0}
\left[
\nabla_{\bftheta}\tilde\ell_n(\bftheta_0)
\right]
=
\sum_{i=1}^n
\bbE_{\bftheta_0}
\left[
\nabla_{\bftheta}\ell_i(\bftheta_0)
\right]
=
\bfzero.
\]
This proves Proposition~\ref{prop:vecchia_score}.
\end{proof}

\section{Experimental Details}
\label{app:experiment_detail}

\subsection{Posterior Fidelity and Scaling}
For the dimension sweep, a fixed lengthscale would also change the effective prior smoothness.
For example, under a squared exponential (SE) covariance function, the correlation between two inputs $\bx$ and $\bx'$ is
\[
\rho_{\ell}^{\mathrm{SE}}(\bx,\bx')
=
\exp\left(
-\frac{\|\bx-\bx'\|^2}{2\ell^2}
\right).
\]
If $\ell$ is fixed while typical squared distances $\|\bx-\bx'\|^2$ grow with $d$, prior correlations decrease automatically.
The dimension sweep would then mix posterior fidelity with a change in the prior correlation structure caused by increasing dimension.
We therefore choose the lengthscale through a fixed median distance correlation criterion rather than treating it as a method tuning parameter.

Let $\{\bx_i\}_{i=1}^n$ denote the training inputs, generated from a Sobol sequence~\cite{Sobol1967OnTD}, and define
\[
r_{\mathrm{med}}
=
\operatorname{median}_{1\leq a < b \leq n}
\|\bx_a-\bx_b\| .
\]
For the Mat\'ern-5/2 covariance used in the simulations, the correlation between two inputs $\bx$ and $\bx'$ is
\[
\rho_{\ell}(\bx,\bx')
=
\left(
1+\sqrt{5}\frac{\|\bx-\bx'\|}{\ell}
+\frac{5\|\bx-\bx'\|^2}{3\ell^2}
\right)
\exp\left(
-\sqrt{5}\frac{\|\bx-\bx'\|}{\ell}
\right).
\]
The isotropic lengthscale $\ell>0$ is selected to satisfy
\[
\left(
1+\sqrt{5}\frac{r_{\mathrm{med}}}{\ell}
+\frac{5r_{\mathrm{med}}^2}{3\ell^2}
\right)
\exp\left(
-\sqrt{5}\frac{r_{\mathrm{med}}}{\ell}
\right)
=
\rho,
\]
where $\rho=0.3$ in the main experiments. For each run and sweep setting, the resulting $\ell$ is shared by all methods.

\subsection{Large-Scale Regression with Observed Gradients}

Let $\bR_i$ denote the Cartesian coordinate vector of the $i$th molecular configuration, let $E_i$ be its molecular energy, and let $\bF_i=-\nabla_{\bR}E_i$ be the force.
All methods use scaled inputs $\bx_i=\bR_i/3$.
Let $\bar E_{\mathrm{tr}}$ and $s_E$ denote the training energy mean and standard deviation.
For \tera and Exact GP, the observed function value and gradient are
\[
y_i
=
-\frac{E_i-\bar E_{\mathrm{tr}}}{s_E},
\qquad
\bg_i
=
\nabla_{\bx_i} y_i
=
\frac{3\bF_i}{s_E}.
\]
A predicted normalized function value $\widehat y_i$ is converted back to physical energy units by
\[
\widehat E_i
=
\bar E_{\mathrm{tr}}-s_E\widehat y_i .
\]

For DDSVGP and DSoftKI, following \cite{huang2026scaling}, energies and forces are normalized by a joint energy and force standard deviation $s_{EF}$.
Their observed function value and gradient are
\[
y_i^{EF}
=
-\frac{E_i-\bar E_{\mathrm{tr}}}{s_{EF}},
\qquad
\bg_i^{EF}
=
\nabla_{\bx_i} y_i^{EF}
=
\frac{3\bF_i}{s_{EF}}.
\]
A predicted normalized function value $\widehat y_i^{EF}$ is converted back by
\[
\widehat E_i
=
\bar E_{\mathrm{tr}}-s_{EF}\widehat y_i^{EF}.
\]
We compute raw energy RMSE after converting predictions back to physical energy units and use it as the main metric for fair comparison across methods. In our experiments, we report raw energy RMSE per atom to make errors comparable across molecular systems with different numbers of atoms. See Table~\ref{tab:md22_settings} for settings.

\begin{table}[tb]
\centering
\caption{
MD22 regression settings for the main experiments.
}
\label{tab:md22_settings}
\small
\setlength{\tabcolsep}{7pt}
\renewcommand{\arraystretch}{1.12}
\begin{tabular}{ll}
\toprule
\multicolumn{2}{l}{\text{Common settings}} \\
\midrule
Train/test split & $90/10$ \\
Seeds & $6535,\ 8830,\ 92357$ \\
Input scaling & Cartesian coordinates $\bR$ are scaled as $\bx=\bR/3$ \\
Kernel & Isotropic SE \\
Initial lengthscale & $1.0$ \\
Value noise parameter & $10^{-3}$ \\
Gradient noise parameter & $10^{-3}d$ for DSoftKI, $10^{-3}$ otherwise \\
Hyperparams optimization & Adam~\cite{kingma2015adam}, weight decay $0.0$\\
\midrule
\multicolumn{2}{l}{\text{Standardization before training}} \\
\midrule
Exact GP & Training energy standard deviation $s_E$ \\
\tera & Training energy standard deviation $s_E$ \\
DDSVGP & Joint energy-force training standard deviation $s_{EF}$ \\
DSoftKI & Joint energy-force training standard deviation $s_{EF}$ \\
\midrule
\multicolumn{2}{l}{\text{Method-specific optimization settings}} \\
\midrule
Exact GP & 50 full-batch  steps, lr $\in\{0.01,0.005\}$ \\
\tera & 1 epoch, conditioning set size 20, batch size $256$, lr $0.01$ \\
DDSVGP & 50 epochs, 2 inducing directions, dimension dependent batch size and lr \\
DSoftKI & 50 epochs, 512 interpolation points, dimension dependent batch size and lr \\
\midrule
\multicolumn{2}{l}{\text{Dimension dependent schedule for DDSVGP and DSoftKI}} \\
\midrule
$d<180$ & batch size $1024$, DDSVGP lr $0.012$, DSoftKI lr $0.008$ \\
$180\le d<300$ & batch size $512$, DDSVGP lr $0.006$, DSoftKI lr $0.004$ \\
$300\le d<1000$ & batch size $256$, DDSVGP lr $0.003$, DSoftKI lr $0.002$ \\
$d\ge1000$ & batch size $128$, DDSVGP lr $0.0015$, DSoftKI lr $0.001$ \\
\bottomrule
\end{tabular}
\end{table}

\subsection{High-Dimensional Bayesian Optimization}

We evaluate high-dimensional BO on Levy-200D and Ackley-200D over $[0,1]^d$.
All methods use the same initial designs, evaluation budgets, and random seeds.
Objective values are standardized before surrogate fitting. For \tera, observed gradients are divided by the same response standard deviation. All experiments are run in double precision on a single GPU.
Table~\ref{tab:bo_common_setup} summarizes the common setup.
\begin{table}[t]
\centering
\small
\setlength{\tabcolsep}{7pt}
\renewcommand{\arraystretch}{1.15}
\caption{Common setup for the high-dimensional BO experiments.}
\label{tab:bo_common_setup}
\begin{tabular}{ll}
\toprule
\text{Setting} & \text{Value} \\
\midrule
Benchmarks & Levy-200D, Ackley-200D \\
Domain & $[0,1]^d$ \\
Evaluation budget & $150$ objective evaluations \\
Initial design & $30$ Sobol points \\
Batch size & $1$ \\
Random seeds & $1,27,42,86,99$ \\
Precision & double \\
\bottomrule
\end{tabular}
\end{table}
VBO follows the implementation of \cite{Hvarfner2024VanillaBO}, with an ARD SE GP and a dimension-scaled log-normal lengthscale prior. TuRBO-1 uses a local ARD Mat\'ern-5/2 GP inside the TuRBO-1 trust region and optimizes LogEI within the current trust region.
\tera uses an ARD Mat\'ern-5/2 derivative GP surrogate with $m=20$ local conditioning inputs and target-specific reduced gradients.
The function-value and gradient observation-noise variances in the \tera surrogate are fixed at $10^{-3}$ for numerical stability.
Table~\ref{tab:bo_method_setup} gives the method-specific settings.
\begin{table}[t]
\centering
\small
\setlength{\tabcolsep}{4.5pt}
\renewcommand{\arraystretch}{1.18}
\caption{Surrogate fitting and acquisition optimization settings for the main experiments.}
\label{tab:bo_method_setup}
\begin{tabularx}{\linewidth}{lYYY}
\toprule
\text{Method}
& \text{Surrogate}
& \text{Model fitting}
& \text{Acquisition optimization} \\
\midrule
VBO
& ARD SE GP
& MAP estimation with a dimension-scaled lengthscale prior
& LogEI, $512$ raw candidates, $4$ restarts, up to $300$ L-BFGS iterations \\
\addlinespace[1pt]
TuRBO-1
& Local ARD Mat\'ern-5/2 GP
& $50$ Adam steps per BO iteration
& LogEI, $512$ raw candidates, $4$ restarts, up to $300$ L-BFGS iterations \\
\addlinespace[1pt]
\tera
& ARD Mat\'ern-5/2 derivative GP
& $50$ initial Adam steps, then $10$ Adam steps every $20$ BO iterations
& LogEI, $256$ raw candidates, $4$ restarts, up to $20$ L-BFGS iterations \\
\bottomrule
\end{tabularx}
\end{table}

For \tera, kernel hyperparameters are learned by maximizing the conditional Vecchia likelihood in \eqref{eq:conditional_vecchia_loglik}.
We run $50$ Adam steps after the initial Sobol design, followed by $10$ warm-started Adam steps every $20$ BO iterations.
Between these updates, the surrogate is rebuilt with the newly observed function values and gradients while the kernel hyperparameters are held fixed.
This separates the incorporation of new derivative observations from the re-estimation of the scaled local geometry.
In the BO setting, frequent hyperparameter updates can repeatedly change the metric used for neighbor selection and the reduced local conditionals, which in turn changes the acquisition landscape.
The periodic warm-started schedule keeps this geometry stable over short BO windows while still allowing the surrogate parameters to adapt as more data are collected.
During acquisition optimization, the candidate-specific neighbor set is treated as fixed within each local predictive evaluation, making the LogEI objective piecewise differentiable.

\section{Additional Experimental Results}
\label{app:additional_experiment}

This section provides additional empirical results that complement the main experiments. We first present further GP regression results with observed gradients, followed by additional high-dimensional BO results.

\subsection{Large-Scale Regression with Observed Gradients}
\label{app:md22_additional_experiment}

\paragraph{Isotropic SE Kernel.}

Figure~\ref{fig:md22_training_curves_rbf} shows that the main separation already appears during training. \tera starts below the other derivative baselines and continues to decrease, while DDSVGP improves slowly and DSoftKI eventually deteriorates after an initial decrease. One possible explanation is that DDSVGP and DSoftKI optimize substantially larger global objectives, including inducing or interpolation parameters in addition to kernel and noise parameters. These additional global parameters can make likelihood or ELBO optimization difficult in high-dimensional spaces. Thus, the figure shows that the performance gap is visible not only in the final errors, but also in the observed training dynamics.
Table~\ref{tab:md22_raw_energy_rmse} gives the final test errors across all six systems. \tera is the best method on every dataset, with raw RMSE per atom between $0.002$ and $0.011$. Exact GP is consistently worse than \tera on every dataset where it is run. DDSVGP and DSoftKI use derivative information, but they remain substantially less accurate than \tera, and in several cases even less accurate than Exact GP, especially on the larger molecular systems.

\begin{figure}[tb]
    \centering
    \includegraphics[width=0.99\textwidth]{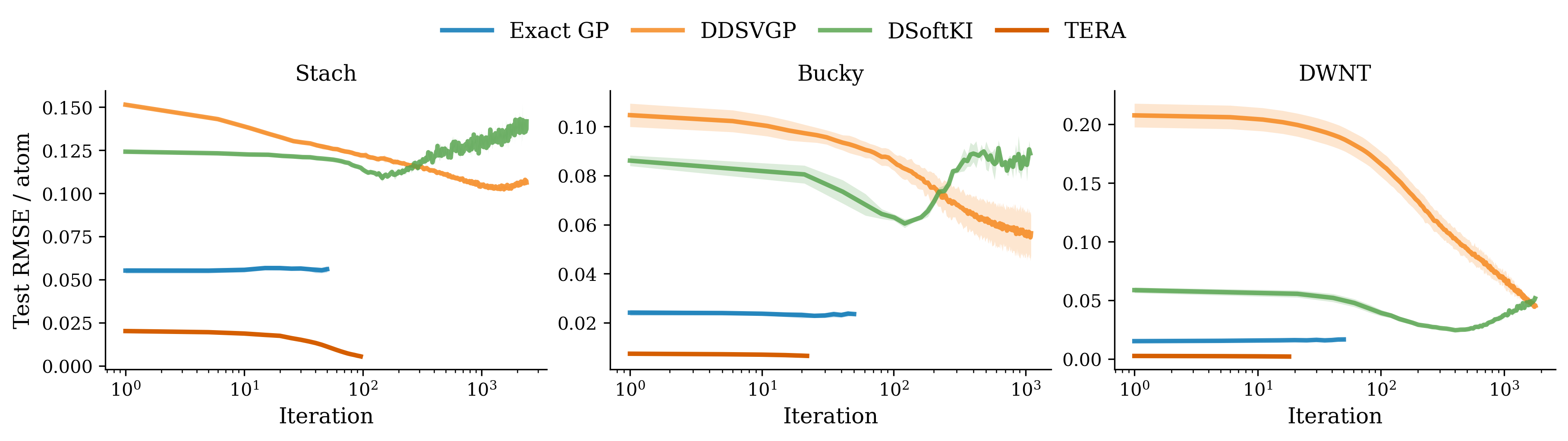}
    \caption{Training curves on selected MD22 benchmarks using an isotropic SE kernel. Each panel reports test RMSE per atom as a function of training step. \tera reaches the lowest error within its shorter training schedule.}
    \label{fig:md22_training_curves_rbf}
\end{figure}

\begin{table*}[tb]
    \caption{MD22 test energy prediction using an SE kernel with a single isotropic lengthscale for six benchmarks. Entries report raw energy RMSE per atom. Parentheses under each dataset give $(n,d)$, where $n$ is the number of molecular configurations and $d$ is the input dimension.}
    \small
    \centering
    \setlength{\tabcolsep}{4.0pt}
    \begin{tabular}{lrrrrrr}
         \toprule
         & DHA
         & AT-AT
         & Stachyose
         & AT-AT-CG-CG
         & Buckyball
         & DWNT \\
         & {\footnotesize $(62777,168)$}
         & {\footnotesize $(18000,180)$}
         & {\footnotesize $(24544,261)$}
         & {\footnotesize $(9137,354)$}
         & {\footnotesize $(5491,444)$}
         & {\footnotesize $(4528,1110)$} \\
         \midrule
         Exact GP
         & --
         & 0.054 {\footnotesize $\pm$ 0.001}
         & 0.056 {\footnotesize $\pm$ 0.002}
         & 0.042 {\footnotesize $\pm$ 0.001}
         & 0.024 {\footnotesize $\pm$ 0.000}
         & 0.017 {\footnotesize $\pm$ 0.001} \\
         DDSVGP
         & 0.155 {\footnotesize $\pm$ 0.001}
         & 0.119 {\footnotesize $\pm$ 0.003}
         & 0.107 {\footnotesize $\pm$ 0.001}
         & 0.068 {\footnotesize $\pm$ 0.005}
         & 0.056 {\footnotesize $\pm$ 0.009}
         & 0.045 {\footnotesize $\pm$ 0.003} \\
         DSoftKI
         & --
         & 0.170 {\footnotesize $\pm$ 0.004}
         & 0.139 {\footnotesize $\pm$ 0.005}
         & 0.126 {\footnotesize $\pm$ 0.003}
         & 0.089 {\footnotesize $\pm$ 0.005}
         & 0.048 {\footnotesize $\pm$ 0.003} \\
         \tera
         & \textbf{0.004} {\footnotesize $\pm$ 0.001}
         & \textbf{0.009} {\footnotesize $\pm$ 0.000}
         & \textbf{0.005} {\footnotesize $\pm$ 0.000}
         & \textbf{0.011} {\footnotesize $\pm$ 0.000}
         & \textbf{0.007} {\footnotesize $\pm$ 0.000}
         & \textbf{0.002} {\footnotesize $\pm$ 0.000} \\
         \bottomrule
    \end{tabular}
    \vspace{2pt}
    \begin{minipage}{0.95\linewidth}
    \footnotesize
    Buckyball denotes buckyball-catcher, and DWNT denotes double-walled-nanotube. Dashes indicate settings that were infeasible due to the resource constraint.
    \end{minipage}
    \label{tab:md22_raw_energy_rmse}
\end{table*}

\paragraph{ARD Mat\'ern-5/2 Kernel.}

As shown in Figure~\ref{fig:md22_main_metrics_matern}, \tera achieves the lowest test RMSE per atom on every benchmark using an ARD Mat\'ern-$5/2$ kernel.
The results show that, under this more flexible kernel, \tera retains the same overall pattern as in the isotropic SE kernel setting.
It gives the best predictive accuracy while keeping computational cost practical.
Figure~\ref{fig:md22_training_curves_matern} shows the corresponding training curves.

\begin{figure}[tb]
    \centering
    \includegraphics[width=0.99\textwidth]{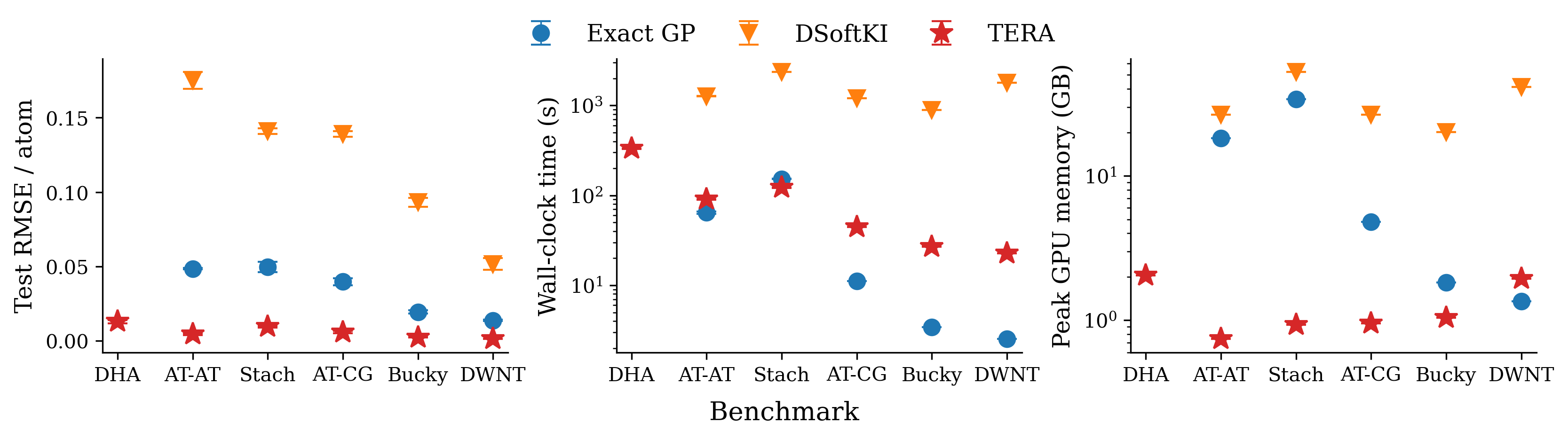}
    \caption{MD22 results with an ARD Mat\'ern-5/2 kernel. Panels show raw energy test RMSE per atom, end-to-end wall-clock time, and peak GPU memory. \tera achieves the lowest test error on every benchmark. DDSVGP is omitted because the implementation does not support ARD Mat\'ern kernels.}
    \label{fig:md22_main_metrics_matern}
\end{figure}

\begin{figure}[tb]
    \centering
    \includegraphics[width=0.99\textwidth]{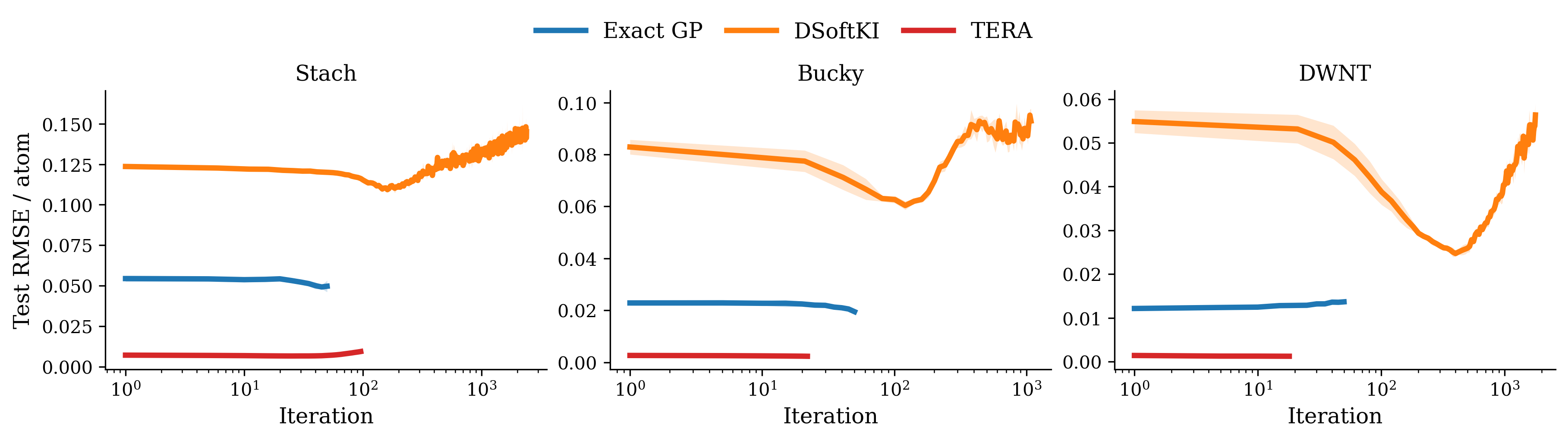}
    \caption{Training curves on selected MD22 datasets using an ARD Mat\'ern-5/2 kernel. Each panel presents test RMSE per atom as a function of training step. \tera reaches the lowest error within its shorter training schedule.}
    \label{fig:md22_training_curves_matern}
\end{figure}

\subsection{High-Dimensional Bayesian Optimization}
\label{app:bo_additional}

The main BO experiment compares all methods under the same small evaluation budget.
However, the original TuRBO-1 evaluation on Ackley-200D uses a much larger query budget and Thompson sampling (TS)~\cite{Thompson1933ONTL} inside the trust region.
We therefore include an additional comparison against TuRBO-1 TS in this original large-budget setup.
This experiment tests whether a strong value-only local BO method can close the gap when it is allowed orders of magnitude more function evaluations.

\begin{figure}[tb]
    \centering
    \includegraphics[width=0.99\textwidth]{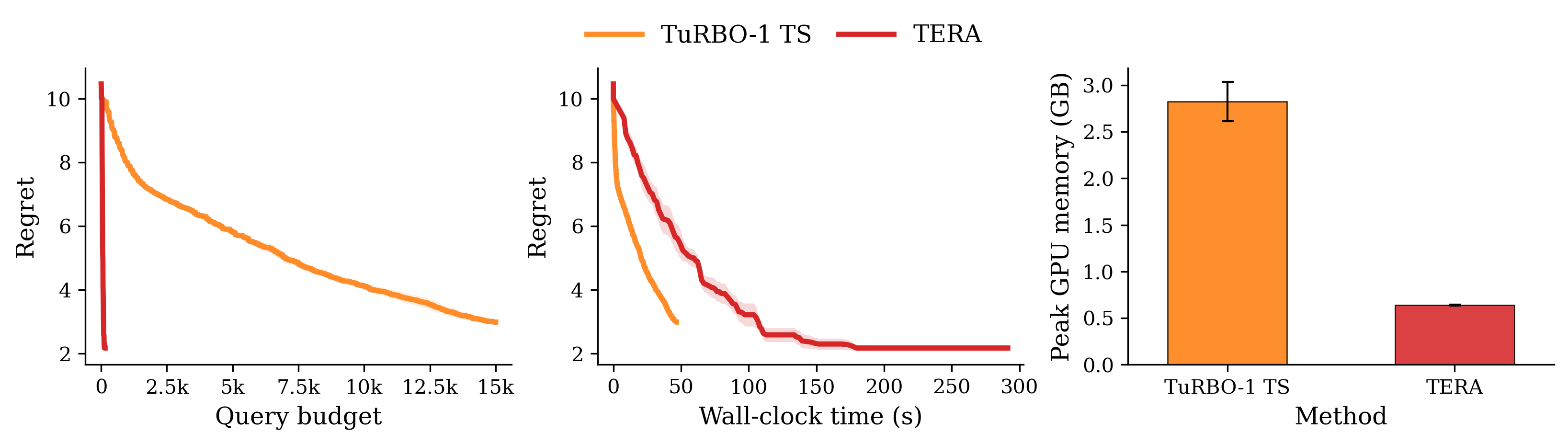}
    \caption{Additional comparison on Ackley-200D against TuRBO-1 TS in a large-budget setting. The left panel shows simple regret against the number of objective queries. The middle panel presents simple regret against end-to-end wall-clock time. The right panel reports peak GPU memory. TERA uses the same setting as in the main experiments.}
    \label{fig:bo_turbo_original}
\end{figure}

Figure~\ref{fig:bo_turbo_original} shows that the larger query budget substantially improves TuRBO-1 TS, but it does not eliminate the advantage of using gradients through \tera.
\tera reaches lower regret within its small budget, whereas TuRBO-1 TS requires thousands of function evaluations to approach the same range and remains above \tera at the end of the run.
The wall-clock comparison shows the expected tradeoff.
TuRBO-1 TS makes faster early progress in time because each update is cheap, while \tera spends more time per BO step evaluating candidate-specific Vecchia derivative GP factors.
Nevertheless, \tera attains the lower final regret.
The memory panel shows that \tera uses far less peak GPU memory than TuRBO-1 TS in this experiment.
Overall, this result indicates that the improvement in the main BO experiment is not merely an artifact of using a small query budget for TuRBO.
On Ackley-200D, \tera's derivative GP surrogate yields better optimization accuracy even when the TuRBO baseline is run with a much larger evaluation budget.

\section{Ablation Study}
\label{app:ablation}

\subsection{Sensitivity to Kernel Smoothness}
\label{app:target_corr_ablation}

\begin{figure}[tb]
    \centering
    \includegraphics[width=0.99\textwidth]{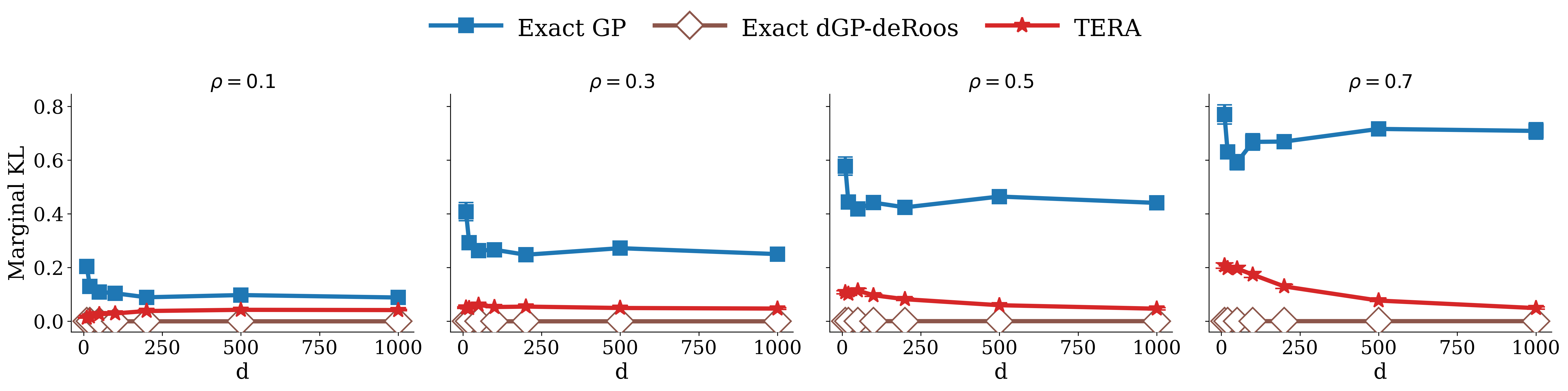}
    \caption{
    Sensitivity to the target median prior correlation in the GP simulation $d$ sweep.
    For each panel, the Mat\'ern-5/2 lengthscale is calibrated so that two training inputs at the median pairwise distance have prior correlation $\rho$.
    With $m=20$ fixed, we vary $\rho\in\{0.1,0.3,0.5,0.7\}$ and report average marginal KL divergence to the exact derivative GP posterior as the input dimension $d$ increases.
    The setting $\rho=0.3$ is used in the main experiments.
    }
    \label{fig:target_corr_ablation}
\end{figure}

Figure~\ref{fig:target_corr_ablation} shows how the posterior fidelity results vary with the correlation level used to calibrate the simulated GP prior.
Across all values of $\rho$, Exact dGP-deRoos remains essentially at zero KL, confirming that it recovers the exact derivative GP reference.
Exact GP has a substantially larger discrepancy, especially for smoother priors with larger $\rho$, reflecting the predictive information lost when gradient observations are ignored.
In contrast, \tera remains much closer to the exact derivative GP posterior across the tested dimensions.
The results confirm that its posterior fidelity is stable across both rougher and smoother simulated priors.

\subsection{Effect of Vecchia Conditioning Set Size}
\label{app:effect_m}

\begin{figure}[tb]
    \centering
    \includegraphics[width=0.99\textwidth]{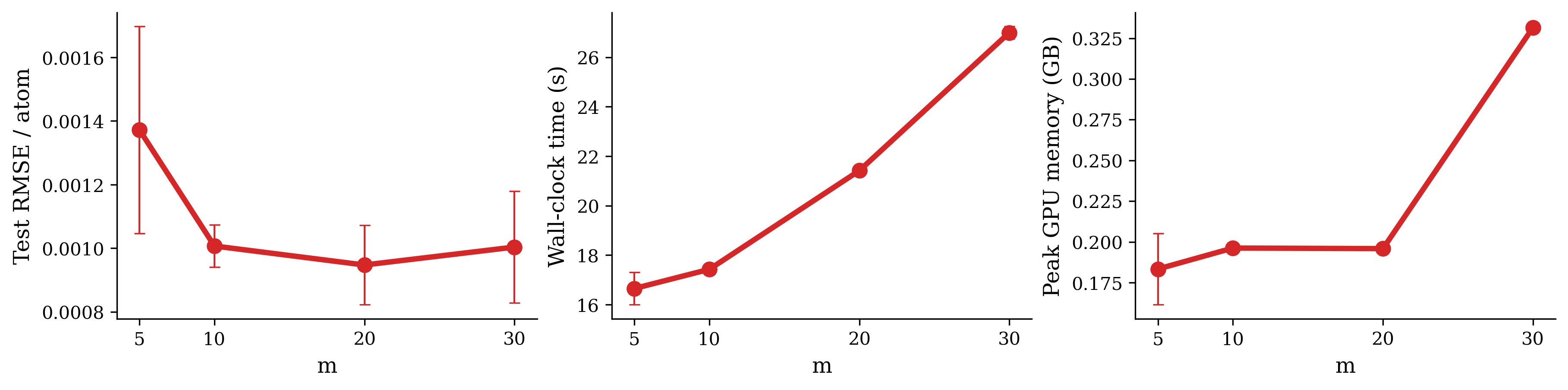}
    \caption{Effect of the Vecchia conditioning set size $m$ on the Buckyball-catcher benchmark.
    The panels report raw energy test RMSE per atom, end-to-end wall-clock time, and peak GPU memory for \tera with an isotropic Mat\'ern-5/2 kernel. Error bars show standard errors across runs.}
    \label{fig:md22_ablation_m_sweep}
\end{figure}

Figure~\ref{fig:md22_ablation_m_sweep} evaluates the tradeoff induced by the Vecchia conditioning set size $m$ on the GP regression experiment.
Increasing $m$ gives each local conditional access to a larger neighborhood, which can improve the approximation quality of the conditional Vecchia surrogate.
At the same time, the reduced local covariance blocks have size at most $m^2 \times m^2$, so the cost of the dense local solve grows quickly with $m$.
The left panel shows that very small neighborhoods are less accurate. Moving from $m=5$ to $m=10$ gives a clear reduction in test RMSE, while increasing to $m=20$ gives the best value in this sweep.
The improvement saturates after this point, and $m=30$ does not further improve prediction accuracy.
The middle and right panels show the corresponding computational tradeoff.
Wall-clock time increases monotonically with $m$, and peak GPU memory remains modest for $m\leq 20$ but increases at $m=30$.
These results support the default choice $m=20$ used in the main GP regression experiments.

\begin{figure}[tb]
    \centering
    \includegraphics[width=0.99\textwidth]{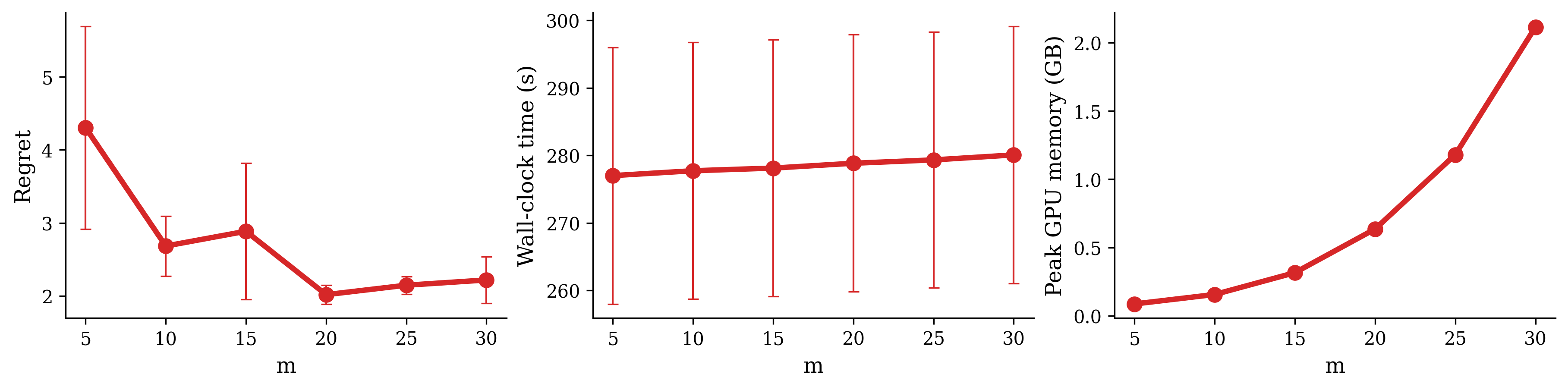}
    \caption{Effect of the Vecchia conditioning set size $m$ on the Ackley-200D benchmark.
    The panels show simple regret, end-to-end wall-clock time, and peak GPU memory for \tera with an ARD Mat\'ern-5/2 kernel. Error bars show standard errors across runs.}
    \label{fig:bo_ablation_m_sweep}
\end{figure}

Figure~\ref{fig:bo_ablation_m_sweep} shows the BO conditioning size sweep on Ackley-200D.
The left panel presents that very small neighborhoods are less effective, with $m=5$ giving the largest mean regret.
Increasing to $m=10$ and $m=20$ improves optimization performance, and the lowest mean regret in this sweep is obtained at $m=20$.
Further increasing the conditioning size to $m=25$ or $m=30$ does not yield an additional improvement under the same query budget.
Increasing $m$ gives each local conditional a larger conditioning set, but the observed BO regret also depends on hyperparameter learning, acquisition optimization, and the sequence of selected points.
The computational panels show a clearer tradeoff.
Wall-clock time changes only mildly across this sweep, because the BO runtime in this setting is dominated by acquisition optimization rather than by the size of $m$ alone.
Peak GPU memory, in contrast, increases sharply with $m$, reflecting the growth of the reduced local covariance blocks.

\subsection{Sensitivity to Observation Noise}

\begin{figure}[tb]
    \centering
    \includegraphics[width=0.45\textwidth]{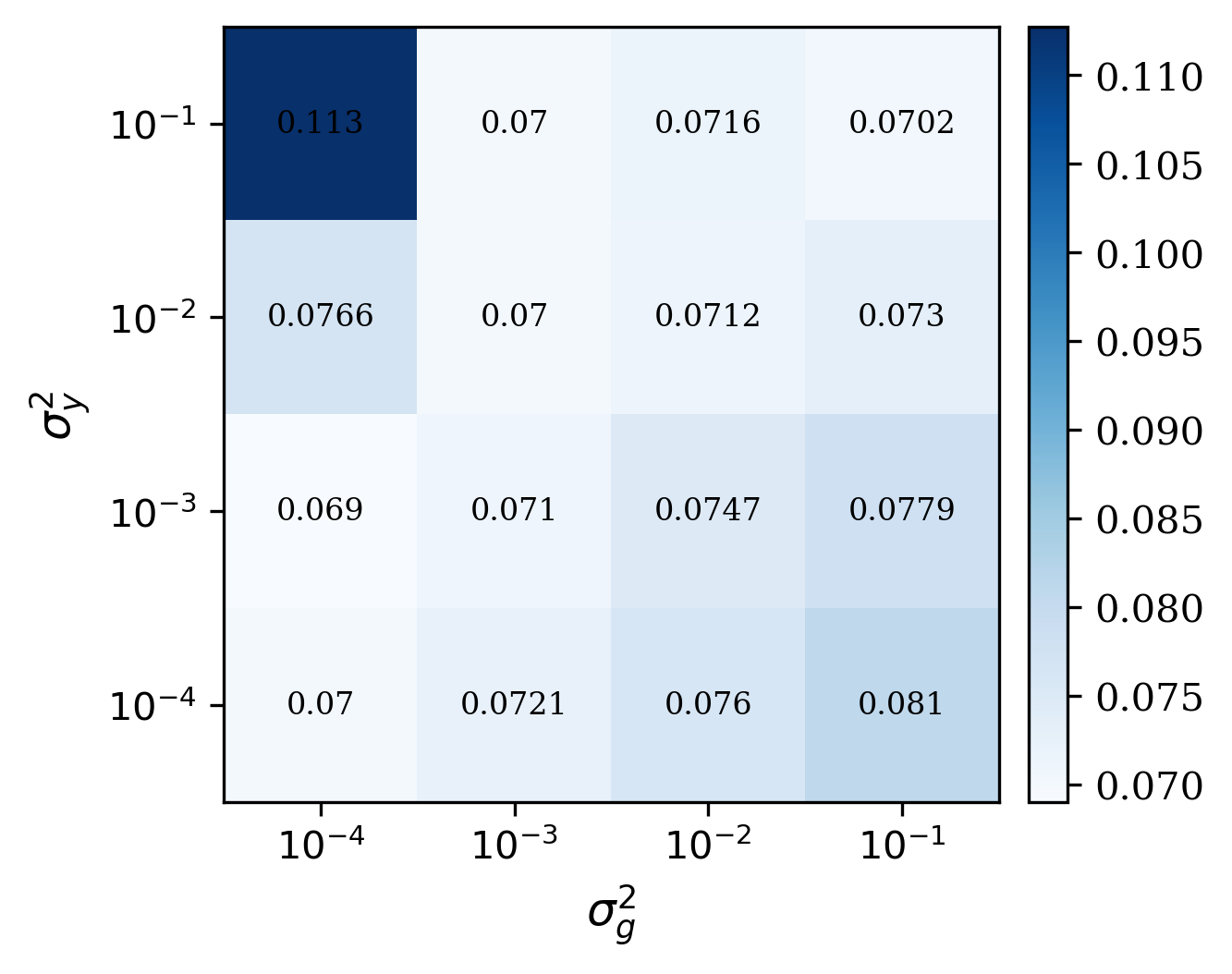}
    \caption{Sensitivity of \tera to the initialization of the observation noise variances for function values, $\sigma_y^2$, and gradients, $\sigma_g^2$, on the Buckyball-catcher benchmark. 
    Each cell reports raw test RMSE per atom for an isotropic Mat\'ern-5/2 kernel. Lower values are better.}
    \label{fig:md22_ablation_noise}
\end{figure}

Since the main GP regression experiments initialize the observation-noise parameters to fixed values, we also test the sensitivity of \tera to these initializations.
Figure~\ref{fig:md22_ablation_noise} shows how the initial values of the observation noise variances for function values, $\sigma_y^2$, and gradients, $\sigma_g^2$, affect test accuracy.
The heatmap varies $\sigma_y^2$ and $\sigma_g^2$ over four orders of magnitude and reports the resulting test RMSE per atom.
Except for the combination of large function value noise and very small gradient noise, the RMSE values remain in a narrow range across the grid.
The main failure mode appears when $\sigma_y^2$ is initialized too large while $\sigma_g^2$ is initialized too small.
This setting overweights gradient information relative to function values at the start of training and gives the largest error in the grid.
For moderate or small $\sigma_y^2$, the method is substantially less sensitive to $\sigma_g^2$.
Overall, the ablation indicates that \tera does not require a finely tuned noise initialization, but it benefits from avoiding extreme initial imbalances between $\sigma_y^2$ and $\sigma_g^2$.

\clearpage



\end{document}